\newmdtheoremenv[style=shaded]{theorem}{Theorem}
\newmdtheoremenv[style=shaded]{proposition}[theorem]{Proposition}
\newmdtheoremenv[style=shaded]{corollary}[theorem]{Corollary}
\newmdtheoremenv[style=shaded]{lemma}[theorem]{Lemma}
\newcommand{\R}{\mathbb{R}}
\newcommand{\Z}{\mathbb{Z}}
\newcommand\N{\mathbb{N}}
\newcommand{\calX}{\mathcal{X}}
\newcommand{\calE}{\mathcal{E}}
\newcommand{\calF}{\mathcal{F}}
\newcommand{\calN}{\mathcal{N}}
\newcommand\Proba[1]{\mathbf{P}\left(#1\right)}
\newcommand\E{\mathbb{E}}
\newcommand\esp[1]{\E\left[#1\right]}
\newcommand\var[1]{\mathrm{var}\left[#1\right]}
\newcommand\abs[1]{\left|#1\right|}
\newcommand\norm[1]{\left\|#1\right\|}
\newcommand\floor[1]{\left\lfloor#1\right\rfloor}
\newcommand{\p}[1]{\left(#1\right)}
\newcommand\toA{^{(\alpha)}}
\newcommand\bO[1]{\mathcal{O}(#1)}
\newcommand\barf{\bar{f}}
\newcommand\normopexact[2][]{\norm{D^{\ifthenelse{\equal{#1}{}}{}{#1}}#2}}
\newcommand\normop[2][]{\norm{D^{\ifthenelse{\equal{#1}{}}{}{\le #1}}#2}}
\newcommand\cont[3][]{\mathcal{C}^{#2}_{\ifthenelse{\equal{#1}{all}}{}{\le1}}(#3,\R)}
\title{Computing the Bias of Constant-step Stochastic Approximation with Markovian Noise}
\author{Sebastian Allmeier\\
Univ. Grenoble Alpes and Inria\\
F-38000 Grenoble, France.
 \And Nicolas Gast\\
 Univ. Grenoble Alpes and Inria\\
 F-38000 Grenoble, France.}
\begin{document}

\maketitle

\begin{abstract}
    We study stochastic approximation algorithms with Markovian noise and constant step-size $\alpha$. We develop a method based on infinitesimal generator comparisons to study the bias of the algorithm, which is the expected difference between $\theta_n$ ---the value at iteration $n$--- and $\theta^*$ ---the unique equilibrium of the corresponding ODE. We show that, under some smoothness conditions, this bias is of order $O(\alpha)$. Furthermore, we show that the time-averaged bias is equal to $\alpha V + O(\alpha^2)$, where $V$ is a constant characterized by a Lyapunov equation, showing that $\esp{\bar{\theta}_n} \approx \theta^*+V\alpha + O(\alpha^2)$, where $\bar{\theta}_n=(1/n)\sum_{k=1}^n\theta_k$ is the Polyak-Ruppert average.  We also show that $\bar{\theta}_n$ converges with high probability around $\theta^*+\alpha V$. We illustrate how to combine this with Richardson-Romberg extrapolation to derive an iterative scheme with a bias of order $O(\alpha^2)$. 
\end{abstract}

\section{Introduction}
\label{sec:intro}

Stochastic approximation (SA) is a widely used algorithmic paradigm to solve fixed-point problems under noisy observations. While SA was introduced in the 1950s \cite{robbins1951stochastic,blum1954approximation}, it is still widely used today in many applications to solve optimization problems, or to implement machine learning or reinforcement learning algorithms \cite{kushner2003stochastic,bertsekas2019reinforcement}. A typical SA is a stochastic recurrence of the form
\begin{align}
    \label{eq:model}
    \theta_{n+1} &= \theta_n + \alpha_n ( f(\theta_n, X_n) + M_{n+1}),
\end{align}
where $\theta_n\in\R^d$ is a vector of parameters, $X_n$ and $M_{n+1}$ are sources of randomness, and $\alpha_n$ is the step-size. The step-size might be decreasing to $0$ as $n$ grow or can be fixed to a small constant (in which case we simply call it $\alpha$).

The goal of running a SA algorithm is to obtain a sequence $\theta_n$ that gets close to the root $\theta^*$ of the function $\bar{f}(\theta) = \esp{f(\theta, X)+M \mid \theta}$, the expectation of the iterate \eqref{eq:model}, as $n$ goes to infinity. The SA procedure described in \eqref{eq:model} models many algorithms that are used in machine learning, like first-order optimization and stochastic gradient descent \cite{lan2020first,dieuleveut2020bridging}, and $Q$-learning or policy gradient algorithms \cite{bertsekas2019reinforcement,sutton2018reinforcement}.

The theory of stochastic approximation has been studied extensively and is covered by a lot of textbooks, \emph{e.g.}, \cite{benveniste2012adaptive,borkar2009stochastic,kushner2003stochastic}. This theory shows that the limit points of $\theta_n$ as $n$ grows are similar to the limit points of the ODE $\dot{x}=\barf(x)$ \cite{benaim2006dynamics,borkar2000ode}.  In particular, one can show that if all solutions of the ODE converge to $\theta^*$, then $\theta_n$ gets close to $\theta^*$ as $n$ grows large (under technical conditions on $f$, the step size $\alpha_n$, and the noise $X_n$ or $M_{n+1}$). The asymptotic behavior of $\theta_n$ depends on the nature of the step-size: When the step-size $\alpha_n$ depends on $n$ and converges to $0$ as $n$ goes to infinity, then $\theta_n$ generally converges almost surely to $\theta^*$. When $\alpha_n$ is kept constant and does not depend on $n$, then $\theta_n$ will in general \textit{not} converge to $\theta^*$ but keeps oscillating around $\theta^*$, with fluctuation amplitudes of order $O(\sqrt{\alpha})$. In this work, we focus on the latter case and propose a method to quantify these fluctuations.

We consider a SA algorithm with Markovian noise. That is, $\theta_n$ evolves as in \eqref{eq:model} where the random variable $X_n$ evolves as a Markov chain that depends on $\theta_n$, and the random variable $M_{n+1}$ is a Martingale difference sequence. One thing that sets this paper apart from others is that we allow the Markovian noise to be dependent on $\theta_n$.  This is particularly valuable in reinforcement learning application such as $Q$-learning where the exploration policy might depend on $\theta_n$ (this would for instance be the case for $\varepsilon$-greedy). It is known that when the step size $\alpha$ is constant, then the expectation of the iterates, $\esp{\theta_n}$, does not converge to $\theta^*$ as $n$ goes to infinity but has a \emph{bias} \cite{yu2021analysis,huo2023bias}. The goal of our paper is to provide a framework to characterize and compute this bias.

Our main contribution is to provide a framework based on semi-groups, to quantify the convergence rate of $\theta_n$ to $\theta^*$.  This framework is similar to Stein's method, that has been recently popularized to obtain accuracy results and refinement terms for fluid limits \cite{braverman2024high,gast2017expected,gast2017refined,ying2017stein}. It allows us to quantify the distance between the stochastic recurrence~\eqref{eq:model} and its deterministic counterpart \eqref{eq:discrete_rec} as a function of the distance between the infinitesimal generators of the stochastic and deterministic systems.  By using this framework, we obtain two main results.  Our first result (Theorem~\ref{thrm:finite_n_bound}) states that, under smoothness conditions on $f$, there exists a constant $C>0$ such that for all small enough $\alpha$:
\begin{align*}
    %\label{eq:1}
    \limsup_{n\to\infty}|\esp{\theta_n} - \theta^* | \le C \alpha\qquad \mathrm{and}\qquad 
    \limsup_{n\to\infty}|\esp{(\theta_n- \theta^*)^2} | \le C \alpha.
\end{align*}
This guarantees that that the bias and variance of $\theta_n$ are of order \emph{at most} $\alpha$. 

A classical way to reduce the variance of SA is to use Polyak-Ruppert tail averaging \cite{polyak1992acceleration,ruppert1988efficient}, \emph{i.e.}, to look at the convergence of $\bar{\theta}_n=(1/n)\sum_{k=1}^n \theta_n$ instead of $\theta_n$. In this paper, we show that, for the constant step-size case, the use of Polyak-Ruppert averaging removes the variance (\emph{i.e.} $\lim_{n\to\infty}\var{\bar{\theta}_n}=0$), but not the bias: Theorem~\ref{thrm:limit_N_refinement} shows that there exists a vector $V$ and a constant $C'>0$ such that:
\begin{align*}
    \limsup_{n\to\infty}\abs{ \esp{\bar{\theta}_n} - \theta^* - V\alpha} \le C \alpha^2.
\end{align*}
This shows that the bias of the averaging is \emph{exactly} of order $\alpha$.  We also show in Theorem~\ref{thrm:limit_N_refinement_proba} that $\bar{\theta}_n$ converges \emph{in probability} to a point in $\theta^*+V\alpha+O(\alpha^2)$ as $n$ goes to infinity.

We also provide numerical simulation on a synthetic example that illustrates how to combine the Polyak-Ruppert averaging with a Richardson-Romberg extrapolation \cite{hildebrand1987introduction} to construct a stochastic approximation algorithm whose bias is of order $O(\alpha^2)$. 
This leads to an algorithm that enjoys both a small bias and a fast converge rate, compared to using a time-step of order $\alpha^2$.

\emph{Roadmap}. The rest of the paper is organized as follows. We describe related work in Section~\ref{sec:related-work}. We introduce the model and give assumptions and notations in Section~\ref{sec:model}. We state the main results and provide numerical illustrations in Section~\ref{sec:main-results}. We give the main ingredients of the proofs in Section~\ref{sec:proof}. The appendices contain technical lemmas and additional numerical examples.

\section{Related work}
\label{sec:related-work}

Stochastic approximation algorithms were first introduced with decreasing step-size \cite{robbins1951stochastic,blum1954approximation}. Since these seminal papers, there has been a considerable amount of work aiming at characterizing the asymptotic properties of SA, by relating the asymptotic behavior of $\theta_n$ with the one of the ODE $\dot{\theta}=\barf(\theta)$ \cite{benaim2006dynamics,borkar2000ode}. These topics are today well covered by textbooks such as \cite{benveniste2012adaptive,borkar2009stochastic,kushner2003stochastic}. 

In this original theory, the goal was to show that $\theta_n$ is close to the behavior of an ODE but not necessarily to obtain tight rates of convergence.  This lead to the development of a new line of research focussing on non-asymptotic properties of SA \cite{moulines2011non}.  This paper derives a unified framework to study the rate of convergence of stochastic gradient descent by relating it to SA with decreasing step-size, and martingale noise (\emph{i.e.}, where the function $f$ does not depend on $X_k$ in \eqref{eq:model}).  This framework has been extended by a series of papers for applications in reinforcement learning, see for instance \cite{chandak2022concentration,chen2022finite,qu2020finite,mou2021optimal}, still in the decreasing step-size case, with the goal of understanding the fluctuations of the algorithm around the equilibrium.

For the decreasing step size, the fluctuations of $\theta_n$ around $\theta^*$ vanish as $n$ goes to infinity. This is not the case for constant step-size SA, for which understanding the magnitude of the fluctuations is therefore important. To bound the fluctuations, a part of the literature seeks to obtain bounds on the mean squared error (MSE), $\esp{(\theta_n-\theta^*)^2}$ as a function of the problem's parameter and the step-size $\alpha$. These papers show that the MSE is generally of order $O(\alpha)$, see for instance \cite{srikant2019finite} for the case of linear model plus Markovian noise, or \cite{chen2023lyapunov,chen2022finite,chen2023concentration} for contractive stochastic approximations (with Markovian or martingale noise). 

The case of constant-step size is also interesting because the $\theta_n$ has a bias that does not vanish as $n$ goes to infinity. In particular, the Polyak-Ruppert averaging does not converge to $\theta^*$: generally $\lim_{n\to\infty}\bar{\theta}_n \ne \theta^*$. It is shown in particular in \cite{dieuleveut2020bridging,huo2023bias,lauand2023curse,sheshukova2024nonasymptotic,yu2021analysis,zhang2024constant} that a constant step-size SA has an asymptotic bias of order $\alpha$, which shows that $\lim_{n\to\infty}\bar{\theta}_n = \theta^* + V\alpha + O(\alpha^2)$. Some of these papers, and in particular \cite{dieuleveut2020bridging,huo2023bias}, show that this bias characterization can be coupled with a Richardson-Romberg extrapolation to obtain a more accurate algorithm.  In our paper, we also study the case of constant-step size SA with Markovian noise and use the same Polyak-Ruppert and Richardson-Romberg analysis. One of the distinguishable property of our model is that we allow the evolution of the Markovian noise $X_n$ to depend on $\theta_n$, whereas most of the cited paper consider $X_n$ has an external source of noise. This dependence is particularly interesting when studying asynchronous $Q$-learning algorithms \cite{watkins1992q,tsitsiklis1994asynchronous}, for which the navigation policy is often derived from the current values of the parameter (a popular example is to use an $\epsilon$-greedy policy). 

A second distinguishable feature of our paper is the methodology that we use. Our main tool is to relate the distance between the expectation of $\theta_n$ and $\theta^*$ as a distance between the infinitesimal generators of the stochastic recurrence \eqref{eq:model} and of the deterministic recurrence \eqref{eq:discrete_rec}. The method that we develop is tightly connected to Stein's method \cite{stein1986approximate} and in particular with the line of work that use this method to obtain accuracy bounds for mean-field approximation \cite{gast2017expected,kolokoltsovMeanFieldGames2012,ying2016rate,ying2017stein}. In particular, the start of our proof, which is to study a hybrid system composed of a stochastic and a deterministic recurrence in equation~\eqref{eq:z_n_k}, can be seen as a discrete-time version of what is termed as a "classical trick" in \cite{gast2017expected,kolokoltsovMeanFieldGames2012}. Apart from this difference between discrete and continuous time, one of the major features of our model is to have a Markovian noise. Our model can be viewed as a discrete-time version of the recent paper \cite{allmeier2023bias}. Some of our results (like our Theorem~\ref{thrm:finite_n_bound}) are analogous to the results of \cite{allmeier2023bias} but others (like Theorem~\ref{thrm:limit_N_refinement}) are different and require time-averaging, mostly because of the possible periodicity of the discrete-time Markov chains. Also, the model of the current paper is slightly more general than the one of \cite{allmeier2023bias}. The fact that our methodology directly studies the expectation of $\theta_n$ makes it different from convergence results that use concentration equalities \cite{chen2023hoeffding}.

\section{Model and preliminaries}
\label{sec:model}

\subsection{Model and first assumptions}

Throughout the paper $(\theta_n,X_n)_{n\ge0}$ is a discrete-time stochastic process adapted to a filtration $(\calF_n)_{n\ge0}$. This stochastic process has two components of different nature. The first component $\theta$, that we call the parameter, is continuous and lives in a compact subset $\Theta\subset\R^d$: for each $n$: $\theta_n\in\Theta\subset\R^n$.  The second component $X$ lives in a finite set $\calX$, i.e., $X_n\in\calX$.

The evolution of $\theta$ and $X$ are coupled in the following way. The parameter $\theta$ evolves according to the recurrence equation given by \eqref{eq:model}.  At every time-step $n$, the process $X$ makes a Markovian transition according to a Markovian kernel $K(\theta_n)$. More precisely, for all $x,x'\in\calX$ and $\theta\in\Theta$:
\begin{align}
    \label{eq:Markov}
    \Proba{X_{n+1} = x' \mid X_n=x, \theta_n = \theta, \calF_n} = \Proba{X_{n+1} = x' \mid X_n=x, \theta_n = \theta} =: K_{x,x'}(\theta).
\end{align}
To obtain our results, we will make the following assumptions:
\begin{enumerate}[label=(A\arabic*)]
    \item \label{A:mart} The process $M$ is a martingale difference sequence (that is: for all $n$: $\esp{M_{n+1}\mid\calF_n}=0$) and its conditional covariance is $\esp{M_{n+1}M_{n+1}^T\mid\calF_n}:=Q(\theta_n,X_n)$. Moreover, we assume that $\esp{M_{n+1}M_{n+1}^T\mid\calF_n\land X_{n+1}}=R(\theta_n,X_n,X_{n+1})$.
    \item \label{A:cont} The functions $f$, $K$, $Q$ and $R$ are four times differentiable in $\theta$.
    \item \label{A:unichain} For any given $\theta\in\Theta$, the matrix $K(\theta)$ is unichain\footnote{We call a probability matrix unichain if the corresponding Markov chain has a single recurrent class and, possibly, some transient states.}.
\end{enumerate}
In addition to these three assumptions, we later add an assumption \ref{A:attractor} about the stability of the ODE around its fixed point. We do not state this assumption here as it needs extra definitions.

\subsection{Averaged values, average ODE and stability assumption}

Assumption~\ref{A:unichain} implies that a Markov chain with kernel $K(\theta)$ has a unique stationary measure $\pi(\theta)$, and we denote by $\pi_x(\theta)$ the stationary probability of $x\in\calX$ for such a Markov chain. For any function $g$ defined on $\Theta\times\calX$, we call $\bar{g}$ its \emph{averaged} version, i.e., the function that associates to $\theta\in\Theta$ the value $\bar{g}(\theta)$, defined as: 
\begin{align}
    \bar{g}(\theta) = \sum_{x\in\calX}g(\theta, x)\pi_x(\theta).
    \label{eq:bar_g}
\end{align}
It is shown in \cite[Lemma~4]{allmeier2023bias}, that under Assumption~\ref{A:unichain}, for each $\theta\in\Theta$, the kernel $K(\theta)$ has a unique stationary distribution $\pi(\theta)$. The value $\bar{g}(\theta)$ is equal to $\esp{g(\theta,X)}$ where $X$ is distributed according to the stationary distribution associated to $K(\theta)$.

Following our notation introduced in \eqref{eq:bar_g}, we denote by $\bar{f}$ the averaged version of $f$. By \cite[Lemma~4]{allmeier2023bias}, the function $\theta\mapsto\pi(\theta)$ is twice differentiable under assumptions \ref{A:cont} and \ref{A:unichain}, which implies that the function $\bar{f}$ is also twice differentiable. This implies, for an initial $\vartheta(0)=\theta$, the ODE $\dot{\vartheta}=\bar{f}(\vartheta)$ has a unique local solution, and we denote the value of this solution at time $t$ by $\phi_t(\theta)$. In order to prove this result, we will need that this ODE has a unique fixed point  to which all trajectories converge and that this fixed point is an exponentially stable attractor. This is summarized in the following asusmption: 
\begin{enumerate}[label=(A\arabic*),resume]
    % instead of resume, we can also use start=4 to start at 4. 
    \item \label{A:attractor} There exists a $\theta^*$ such that for any $\theta$, the solution of the ODE $\dot{\vartheta}=\bar{f}(\vartheta)$ is defined for all $t>0$ and converges to $\theta^*$: $\lim_{t\to\infty}\phi_t(\theta)=\theta^*$ for all $\theta\in\Theta$. Moreover, the derivative of $\bar{f}$ at $\theta^*$ is Hurwitz (i.e., the real parts of all of its eigenvalues are negative). 
\end{enumerate}
By classical results on the stability of ODES, this assumption implies that the convergence to $\theta^*$ occurs exponentially fast, that is, there exists $a,b>0$ such that for all $\theta\in\Theta$: $\norm{\phi_t(\theta)-\theta^*} \le a e^{-b t}\norm{\theta-\theta^*}$. To prove that, one can use \cite[Theorem~4.13]{khalil_nonlinear_2002}, that shows that $\theta^*$ is an exponentially stable point of the ODE, \emph{i.e.}, $\norm{\phi_t(\theta)-\theta^*} \le a' e^{-b t}\norm{\theta-\theta^*}$ in a neighborhood $\calN$ of $\theta^*$, because $D\bar{f}(\theta^*0)$ is Hurwitz. Then, as $\Theta$ is compact, there exists $T$ such that $\phi_T(\theta)\in\calN$ for all $\theta\in\Theta$. The result then follows by choosing $a=a'e^{Tb}\sup_{\theta\in\Theta}\norm{\theta-\theta^*}/\sup_{\theta\in\calN}\norm{\theta-\theta^*}$.

\subsection{Discussion on the assumptions and limits}
\label{ssec:limits}

Most of the assumptions used in the paper are classical when studying stochastic approximation algorithms with Markovian noise. In this section, we discuss the limits of each assumption (from the most classical one to the most original one). Assumption~\ref{A:unichain} is necessary to define the notion of average dynamics $\bar{f}(\theta)$ and is therefore present in virtually all papers about Markovian noise stochastic approximation. The originality of our assumption is that we allow the transition kernel to depend on $\theta$. For assumption~\ref{A:mart}, we add to the classical result the existence of a co-variance matrix $Q$, which is needed as it appears in the expression of $V$.  Assumption~\ref{A:attractor} imposes that the stochastic approximation has a unique exponentially stable attractor. Our results could probably be adapted to a model with multiple attractors (by using large deviation techniques similar to the one of \cite{azizian2024long,yasodharan2022large} for a two time-scale setting as ours) but this would be another paper. 

One important assumption we make is that $\theta$ lives in a compact set $\Theta$. The bounded condition on $\Theta$ simplifies greatly the proofs because it allows us to use uniform bounds that do not depend on $\theta$ (for instance, the identity function $h(\theta)=\theta$ is bounded thanks to this assumption. Similarly, the time taken by the solution of the ODE $\phi_t(\theta)$ to reach $\theta^*$ is also uniformly bounded independently on $\theta$). This assumption could be relaxed by imposing high probability bounds (for instance, a large deviation result like \cite{azizian2024long} that would guarantee that $\theta_n$ stays close to $\theta^*$), or a bound on the higher-order moment or exponential moments of $\theta_n$. We left this result for future work as it would greatly impact the readability of the proofs. 

The most questionable of our assumption is Assumption~\ref{A:cont} that imposes that all parameters of the problem are four times differentiable in $\theta$.  While this assumption might seem as technical, the fact that the parameters are twice differentiable is crucial in our analysis. In particular, the constant $V$ does depend on the first two derivatives of the function $\bar{f}$. In general, if the parameters of the systems are not differentiable, then the bias will not be of order $O(\alpha)$ but of order $O(\sqrt{\alpha})$. Treating a non-differentiable $\bar{f}$ would need a completely different methodology.  Our assumption of having \emph{four} times differentiable functions and not just twice has two reasons: it guarantees that the error $\norm{\bar{\theta}_n-(\theta^*+\alpha V)}$ is $O(\alpha^2)$, and it simplifies the proof by allowing us to reuse Theorem~\ref{thrm:finite_n_bound} to obtain the bound $O(\alpha^2)$ in Theorem~\ref{thrm:limit_N_refinement}. We believe that if we only impose twice differentiability, this error would be $o(\alpha)$ (and in fact probably $\alpha^{3/2}$ as long as all derivatives are Lipschitz-continuous).

\subsection{Notations}

Recall that $\Theta$ is a compact subset of $\R^d$. We suppose that it is equipped with a norm $\norm{\cdot}$. For a function $h:\calE\to\R$, where $\calE\subset\R^{d'}$, we denote by $\norm{h}=\sup_{e\in\calE}\norm{h(e)}$ the supremum of this function. If $h$ is $i$ times differentiable, we denote by $D^ih$ its $i$th derivative. Its value evaluated in $e\in\calE$ is denoted by $D^ih(e)$. It is a multi-linear map and we denote by $D^ih(e) a^{\otimes i}$ its value applied to $(a,\dots, a)$ for a given $a\in \R^{d'}$. We denote by $\normopexact[i]{h}$ the operator norm of its $i$th derivative, defined as $\normopexact[i]{h} = \sup_{e\in\calE,a\in\R^{d'}}\norm{D^ih(e)a^{\otimes j}}/\norm{a}^j$.  By Taylor remainder theorem, for all $e\in\calE$ and $a\in\R^{d'}$, we have:
$%\begin{align*}
    \norm{h(e+a) - h(e) - \sum_{j=1}^{i-1} D^jh(e) a^{\otimes j}}\le \normopexact[i]{h}  \norm{a}^i$.
%\end{align*}

We define by $\normop[i]{h}=\max(\norm{h},\max_{j\le i}\normopexact[j]{h})$ the maximum of the norm of the first $i$th derivatives of $h$. We denote by $\cont[all]{i}{\calE}$ the set of functions whose first $i$ derivatives are bounded and by $\cont{i}{\calE}$ the set of functions whose first $i$ derivatives are bounded by $1$: $\normop[i]{h}\le1$. As $\calX$ is a discrete set, the above notions extend to functions $h:\calE\times\calX\to\R$. For such a function, by abuse of notation, we call $D^ih$ the $i$th derivative with respect to the continuous variable only.

In the paper, the value at time $t$ of the solution of the ODE $\dot{\vartheta}=\bar{f}(\vartheta)$ starting in $\theta$ is denoted by $\phi_t(\theta)$. We will introduce later a quantity $\varphi_n(\theta)$ that corresponds to a Euler discretization of the ODE with constant step-size $\alpha$. To lighten notations, we will omit the dependence on $\alpha$ in the notation $\varphi$ but it should be clear that $\varphi$ depends on $\alpha$.  To help distinguishing between the solutions of the the ODE $\phi_t$ and the discrete-time recurrence $\varphi_n$, we will reserve the index $t$ for a continuous time variable and the indices $n$, $k$ or $N$ for discrete-time indices.

To ease notations in some part of the proofs, we will sometimes use big-O notations, like $\bO{1}$ or $\bO{\alpha}$. When using this, we allow the hidden constants to depends on all parameters of the problems defined in Assumptions~\ref{A:mart} to \ref{A:attractor} but they cannot depend on varying quantities likes an index $k$, $n$ or $\alpha$ or norm of functions, like $\norm{g}$ or $\norm{h}$ that are introduced in the proofs. Note that in all of our lemmas, we always consider functions whose norm is bounded by $1$. This can, of course, be readily extended to functions not bounded by $1$ by adding an extra factor like $\normop[i]{h}$. We avoid doing this to lighten the notations.

\section{Main results and illustrations}
\label{sec:main-results}

This section presents the main theorems and illustrate their consequence. The proof of the theorems are postponed to the next Section~\ref{sec:proof}.

\subsection{Theoretical results}

Our first result is Theorem~\ref{thrm:finite_n_bound} that shows that the expected value of $\theta_n$ is at distance at most $\bO{\alpha}$ of the solution of the ODE. This shows that the bias of stochastic approximation is of order $\alpha$ with respect to the ODE.

\begin{theorem} \label{thrm:finite_n_bound}
    Assume \ref{A:mart}--\ref{A:attractor}. Then, there exists a constant $C>0$ and $\alpha_0$ such that for all $n$, $\alpha\le\alpha_0$ and all $h\in\cont{3}{\Theta}$:
    \begin{align*}
        \abs{\esp{h(\theta_n) - h(\phi_{\alpha n}(\theta_0))} } \leq \alpha C.
    \end{align*}
\end{theorem}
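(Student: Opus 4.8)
The plan is to compare $\theta_n$ with the continuous flow $\phi_{\alpha n}(\theta_0)$ in two stages, inserting the Euler discretization $\varphi_n$ of the averaged ODE (with $\varphi_1(\theta)=\theta+\alpha\barf(\theta)$) as an intermediate object: first I would bound $\abs{\esp{h(\theta_n)} - h(\varphi_n(\theta_0))}$ by a generator-comparison argument, and then bound $\abs{h(\varphi_n(\theta_0)) - h(\phi_{\alpha n}(\theta_0))}$ by a standard numerical-analysis estimate. For the second stage, since $\barf$ is smooth (by \ref{A:cont}) and the flow is exponentially contracting towards $\theta^*$ (the consequence of \ref{A:attractor} recalled in the text), the local truncation error of the Euler scheme is $\bO{\alpha^2}$ per step while the contraction keeps the accumulated global error at $\bO{\alpha}$ uniformly in $n$; composing with the $1$-Lipschitz function $h$ preserves this bound.

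The heart of the proof is the first stage, for which I would use the hybrid ``classical trick'' alluded to in the introduction. Define $g_m := h\circ\varphi_m$ and the interpolating family $Z_{n,k} := \varphi_{n-k}(\theta_k)$, so that $Z_{n,0}=\varphi_n(\theta_0)$ is purely deterministic and $Z_{n,n}=\theta_n$ is the stochastic iterate. Telescoping and using $\varphi_{n-k}=\varphi_{n-k-1}\circ\varphi_1$ gives
\begin{align*}
    \esp{h(\theta_n)} - h(\varphi_n(\theta_0)) = \sum_{k=0}^{n-1}\esp{g_{n-k-1}(\theta_{k+1}) - g_{n-k-1}(\varphi_1(\theta_k))},
\end{align*}
which reduces the problem to controlling, for each smooth test function $g_m$, the one-step discrepancy between the true update $\theta_{k+1}=\theta_k+\alpha(f(\theta_k,X_k)+M_{k+1})$ and the averaged Euler update $\varphi_1(\theta_k)=\theta_k+\alpha\barf(\theta_k)$.

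Bounding each summand is where the two sources of randomness must be treated differently, and this is the main obstacle. Conditioning on $\calF_k$ and Taylor-expanding $g_m$ around $\varphi_1(\theta_k)$ to second order, the martingale increment $M_{k+1}$ vanishes at first order (by \ref{A:mart}) and contributes only a term of size $\bO{\alpha^2\normopexact[2]{g_m}}$ through its conditional covariance $Q$. The genuinely Markovian part $\alpha(f(\theta_k,X_k)-\barf(\theta_k))$, however, is a priori only $\bO{\alpha}$ per step, because $X_k$ is $\calF_k$-measurable and need not be stationary, so it does \emph{not} average out in a single step. To recover the missing factor of $\alpha$, I would introduce the solution of the Poisson equation for the kernel $K(\theta)$, i.e. a function $\hat f(\theta,\cdot)$ with $\hat f(\theta,x)-\sum_{x'}K_{x,x'}(\theta)\hat f(\theta,x') = f(\theta,x)-\barf(\theta)$, whose existence and regularity in $\theta$ follow from \ref{A:unichain} and \ref{A:cont}. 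Writing the fluctuation as this coboundary, $f(\theta_k,X_k)-\barf(\theta_k)=\hat f(\theta_k,X_k)-\esp{\hat f(\theta_k,X_{k+1})\mid\calF_k}$, and then applying summation by parts turns its contribution into telescoping martingale-type increments whose coefficient differences are $\bO{\alpha^2}$, plus boundary terms that are $\bO{\alpha}$ but occur only at the two endpoints.

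Finally I would collect the decay of the derivatives to make the sum converge. Exponential stability of $\phi_t$ transfers, for $\alpha\le\alpha_0$, to a geometric contraction of the Euler flow, which gives $\normop[2]{g_m}=\normop[2]{h\circ\varphi_m}\le C e^{-c\alpha m}$ for constants $c,C>0$, using $\normop[3]{h}\le1$ to control the chain-rule derivatives $D\varphi_m$ and $D^2\varphi_m$. Summing the per-step bound then yields
\begin{align*}
    \sum_{k=0}^{n-1}\bO{\alpha^2}\,e^{-c\alpha(n-1-k)} = \bO{\alpha^2}\sum_{m\ge0}e^{-c\alpha m} = \bO{\alpha^2}\cdot\bO{1/\alpha} = \bO{\alpha}
\end{align*}
uniformly in $n$, and combining with the Euler-versus-ODE estimate completes the proof. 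The two technical points I expect to absorb most of the work are establishing uniform bounds on the Poisson solution $\hat f$ together with its $\theta$-derivatives, and verifying that the geometric contraction of $\varphi_m$ and of its first two derivatives holds uniformly for all small $\alpha$; these are precisely what make the otherwise-divergent sum of $n$ terms of size $\alpha^2$ collapse to $\bO{\alpha}$.
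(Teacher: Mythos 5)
Your proposal is correct and follows essentially the same route as the paper: your interpolating family $Z_{n,k}=\varphi_{n-k}(\theta_k)$ with its telescoping sum is exactly the paper's $z_{n,k}$ from \eqref{eq:z_n_k}--\eqref{eq:z_diff}, your Taylor expansion (martingale term killed at first order by \ref{A:mart}, Markovian fluctuation rewritten through the Poisson equation and handled by summation by parts) reproduces the proof of Proposition~\ref{prop:Vn} together with Lemmas~\ref{lem:Poisson} and \ref{lem:kind_of_telescopic_sum_bound}, and your geometric decay of $\normop[2]{h\circ\varphi_m}$ is Lemma~\ref{lemma:exponentially_small_derivatives}. The one point where you are more explicit than the paper is the separate $\bO{\alpha}$ Euler-versus-ODE comparison between $\varphi_n(\theta_0)$ and $\phi_{\alpha n}(\theta_0)$, which the paper leaves implicit when it states Proposition~\ref{prop:Vn} for $\phi_{n\alpha}$ while its telescoping argument actually produces $\varphi_n$.
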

Note that the bound of Theorem~\ref{thrm:finite_n_bound} is valid independently of $n$. As we will see in the proof, this is a consequence of Assumption~\ref{A:attractor}. Without the latter assumption, one would naturally obtain a constant $C$ that grows exponentially with $n$. As $C$ does not depend on time, a direct consequence of Theorem~\ref{thrm:finite_n_bound} is to show that the asymptotic bias of $\theta_n$ is of order $\bO{\alpha}$ as $n$ goes to infinity, that is:
\begin{align}
    \label{eq:bias_bound}
    \limsup_{n\to\infty}\abs{\esp{h(\theta_n)} - h(\theta^*)} \leq \alpha C.
\end{align}

Our second result is Theorem~\ref{thrm:limit_N_refinement}, that shows that the inequality of \eqref{eq:bias_bound} is essentially an equality. This theorem provides an asymptotic expansion of the bias term in $\alpha$. 
\begin{theorem} \label{thrm:limit_N_refinement} 
    Assume \ref{A:mart}--\ref{A:attractor}. Then, there exists a constant $C'>0$ and $\alpha_0$ such that for all $\alpha\le\alpha_0$ and $h\in\cont{5}{\Theta}$:
    \begin{align}
        \limsup_{n\to\infty}\abs{\frac{1}{n}\sum_{k=1}^{n}\esp{h(\theta_{k})} - h(\theta^*) - \alpha V} & \leq \alpha^2 C'. \label{eq:refinement_statement_accuracy_lim_N}
    \end{align}
\end{theorem}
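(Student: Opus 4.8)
The plan is to combine a Poisson equation for the limiting ODE with a second Poisson equation for the fast Markov kernel $K(\theta)$, and then to close the estimate at second order by reusing Theorem~\ref{thrm:finite_n_bound}. I would first introduce the solution of the ODE--Poisson equation
$$G_h(\theta) := \int_0^\infty \big(h(\phi_t(\theta)) - h(\theta^*)\big)\,dt,$$
which is well defined and has bounded derivatives because Assumption~\ref{A:attractor} gives exponential contraction of $\phi_t$ toward $\theta^*$ and $h\in\cont{5}{\Theta}$. Differentiating along the flow shows it satisfies $DG_h(\theta)\,\barf(\theta) = -(h(\theta)-h(\theta^*))$. This regularity budget is exactly where $\cont{5}{\Theta}$ (rather than $\cont{3}{\Theta}$) is needed: I will eventually apply Theorem~\ref{thrm:finite_n_bound} to functions built from $D^2G_h$, which must therefore still lie in $\cont{3}{\Theta}$.

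Next I would Taylor-expand $G_h$ along one stochastic step. Writing $\theta_{n+1}-\theta_n = \alpha(f(\theta_n,X_n)+M_{n+1})$ and taking the conditional expectation with $\esp{M_{n+1}\mid\calF_n}=0$ and $\esp{M_{n+1}M_{n+1}^T\mid\calF_n}=Q(\theta_n,X_n)$ gives
$$\esp{G_h(\theta_{n+1})-G_h(\theta_n)\mid\calF_n} = \alpha\,DG_h(\theta_n)f(\theta_n,X_n) + \tfrac{\alpha^2}{2}\big(D^2G_h(\theta_n)f(\theta_n,X_n)^{\otimes2} + D^2G_h(\theta_n){:}Q(\theta_n,X_n)\big) + \bO{\alpha^3}.$$
Summing over $k=0,\dots,n-1$, the left-hand side telescopes to $\esp{G_h(\theta_n)}-\esp{G_h(\theta_0)}=\bO{1}$; dividing by $n\alpha$ and letting $n\to\infty$ kills this boundary term, while the summed $\bO{\alpha^3}$ remainder becomes $\bO{\alpha^2}$ (the factor $n$ cancels). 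Using the Poisson identity, the first-order term splits as $DG_h\,\barf = -(h-h(\theta^*))$ plus the centered piece $DG_h(\theta_n)(f(\theta_n,X_n)-\barf(\theta_n))$, so that $-\big(\tfrac1n\sum_k\esp{h(\theta_k)}-h(\theta^*)\big)$ emerges as the leading contribution --- exactly the quantity the theorem estimates.

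Then comes the crux. The centered term $DG_h(\theta_n)(f(\theta_n,X_n)-\barf(\theta_n))$ has zero mean only against $\pi(\theta_n)$, but $X_n$ is correlated with $\theta_n$ along the trajectory, so its time average is nonzero and produces an $\bO{\alpha}$ contribution to the bias. I would control it through the Poisson equation for the kernel, $(I-K(\theta))\psi_\theta = f(\theta,\cdot)-\barf(\theta)$, which has a unique $\pi(\theta)$-centered, $\theta$-smooth solution by Assumption~\ref{A:unichain}. Using $\esp{\psi_{\theta_n}(X_{n+1})\mid\calF_n}=\psi_{\theta_n}(X_n)-(f(\theta_n,X_n)-\barf(\theta_n))$, I rewrite the centered term as a discrete telescoping difference of $DG_h(\theta_k)\psi_{\theta_k}(X_k)$ plus a correction coming from the $\bO{\alpha}$ change of $\theta$ between steps; the telescoping part vanishes after averaging and the correction is genuinely of order $\alpha$, producing the Markovian part of $V$ (this is where the finer covariance $R$ of Assumption~\ref{A:mart} enters, through the coupling between $M_{n+1}$ and $X_{n+1}$).

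Finally I would evaluate all the $\bO{\alpha}$-order Cesàro averages at the fixed point. Each is of the form $\tfrac1n\sum_k\esp{g(\theta_k,X_k)}$ for a smooth $g$ assembled from $D^2G_h$, $f$, $Q$, $R$ and $\psi$; applying Theorem~\ref{thrm:finite_n_bound} to $\bar g$ (plus one more kernel-Poisson step to average out $X_k$) shows it equals $\bar g(\theta^*)+\bO{\alpha}$, which becomes $\bO{\alpha^2}$ after the $\alpha$ prefactor. Collecting the terms gives $\tfrac1n\sum_k\esp{h(\theta_k)} - h(\theta^*) = \alpha V + \bO{\alpha^2}$, where $V$ is the sum of the Hessian/covariance term and the Markovian term evaluated at $\theta^*$; the curvature term $D\barf\,\barf$ drops because $\barf(\theta^*)=0$, and the Lyapunov structure of $V$ comes from expressing $D^2G_h(\theta^*)$ through $A=D\barf(\theta^*)$, which is Hurwitz by Assumption~\ref{A:attractor}. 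Taking $\limsup_n$ of the finite-$n$ identity yields the stated bound. The main obstacle is this Markovian correlation term: establishing smoothness and uniform bounds of $\psi_\theta$ in $\theta$, and justifying that the two-timescale averaging --- in the presence of possible periodicity of $K(\theta)$, which is precisely why one must time-average rather than take a per-$n$ limit --- costs only $\bO{\alpha}$ at this order, so that the final error is $\bO{\alpha^2}$.
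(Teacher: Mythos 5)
Your proposal is correct, but it takes a genuinely different route from the paper's. The paper never introduces the ODE--Poisson function $G_h$: it interpolates between the stochastic iterates and the Euler-discretized flow via the hybrid quantities $z_{n,k}=\esp{h(\varphi_{n-k}(\theta_k))}$ of \eqref{eq:z_n_k}, Taylor-expands each swap $z_{n,k+1}-z_{n,k}$ (Proposition~\ref{prop:Vn}, resting on the exponential decay of the derivatives of $h\circ\varphi_n$ from Lemma~\ref{lemma:exponentially_small_derivatives}), then proves that the Cesaro average of the resulting $\alpha$-dependent constants $V\toA_n$ converges (Proposition~\ref{prop:V}), which requires the double-sum bookkeeping of Lemma~\ref{lem:refined_barf} with its $\bO{1/(N\alpha^2)}$ tail term, and finally compares the discrete Sylvester solution $W^{(\alpha)}=\alpha\sum_k(I+\alpha H^{(1)})^k O (I+\alpha (H^{(1)})^T)^k$ with its continuous-time counterpart (Lemma~\ref{lem:computable}) to strip the residual $\alpha$-dependence from $V\toA$. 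Your single telescoping of $G_h(\theta_k)$ collapses most of this machinery: the weight $DG_h(\theta_k)$ does not depend on $(n,k)$, so the refined double average of Lemma~\ref{lem:refined_barf} is unnecessary, and because $G_h$ is built from the continuous flow, $V$ emerges directly in $\alpha$-free Lyapunov form ($DG_h(\theta^*)=-Dh(\theta^*)(H^{(1)})^{-1}$, and $D^2G_h(\theta^*)$ encodes $\int_0^\infty e^{tH^{(1)}}\,\cdot\,e^{t(H^{(1)})^T}dt$), with no discrete-to-continuous comparison needed. Where you coincide with the paper is exactly the crux you identified: your kernel--Poisson correction is the paper's Lemma~\ref{lem:Poisson2} mechanism almost verbatim (your $\psi_\theta$ is its $G_g$; your $\bO{\alpha}$ drift extracted from the one-step change of $\theta$, including the $M_{k+1}$--$X_{k+1}$ coupling, is its function $m$, down to the extra zeroth-order averaging pass applied to $m$), and the closing step of feeding the functions built from second derivatives into Theorem~\ref{thrm:finite_n_bound} is also how the paper proves Proposition~\ref{prop:V}, which is why both arguments burn the same $\cont{5}{\Theta}$ regularity budget. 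Two details worth making explicit in a write-up, though neither is a gap: your uncentered Hessian term $D^2G_h\,f^{\otimes2}$ matches the paper's centered $(f-\barf)^{\otimes2}$ only because everything is ultimately evaluated at $\theta^*$ where $\barf(\theta^*)=0$; and what enters at this order through the martingale--noise coupling is the conditional law of $M_{k+1}$ given $X_{k+1}$, for which the paper overloads the symbol $R$ from \ref{A:mart}. The trade-off is that the paper's hybrid construction yields Theorem~\ref{thrm:finite_n_bound} itself as a by-product, whereas your shorter, purely stationary argument must invoke Theorem~\ref{thrm:finite_n_bound} as an external input --- which is legitimate, since the paper establishes it independently of the present theorem.
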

An important difference between Theorem~\ref{thrm:finite_n_bound} and \ref{thrm:limit_N_refinement} is that the former studies the convergence of the iterates $\theta_n$ whereas the latter provides a refinement term for the average of the iterates, i.e., $\bar{\theta}_n := \frac{1}{n}\sum_{k=1}^n \esp{\theta_k}$, by showing that $\bar{\theta}_n$ is essentially equal to $h(\phi_\infty)$ plus a bias term $V\alpha$.  One may wonder if Theorem~\ref{thrm:limit_N_refinement} would be true for the (non-averaged) iterates $\theta_n$. The answer is no and a counter-example is provided in Appendix~\ref{apx:additional_numerical}. This example illustrates that when the Markovian component $X_n$ can be periodic, the $O(\alpha)$ term of $\esp{\theta_n}$ does not necessarily stabilize to a constant $V$ but can be periodic as well.

Theorem~\ref{thrm:limit_N_refinement} concerns the convergence of the \emph{expectation} of $\bar{\theta}_n$ but not the value of $\bar{\theta}_n$ itself. As we will see in Section~\ref{sec:proof}, this is mostly due to our proof techniques that works with generators and therefore is most suitable to obtain precise convergence results for the expectation. In fact, we will prove in Section~\ref{ssec:high-proba} that we can obtain a high-probability convergence result as an almost-direct consequence of Theorem~\ref{thrm:limit_N_refinement}, as expressed by the following Theorem.

\begin{theorem} \label{thrm:limit_N_refinement_proba} 
    Assume \ref{A:mart}--\ref{A:attractor}. Then, there exists a constant $C''>0$ and $\alpha_0$ such that for all $\alpha\le\alpha_0$:
    \begin{align*}
        \lim_{n\to\infty} \Proba{\abs{\bar{\theta}_n - (\theta^* + \alpha V)} \ge C''\alpha^2} = 0. 
    \end{align*}
\end{theorem}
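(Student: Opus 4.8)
The plan is to obtain this high-probability statement from the expectation bound of Theorem~\ref{thrm:limit_N_refinement} through a variance-plus-Chebyshev argument, so that almost all the work is already contained in Theorem~\ref{thrm:limit_N_refinement} and the only genuinely new ingredient is the vanishing of the variance of the average. Writing $\bar{\theta}_n=\frac1n\sum_{k=1}^n\theta_k$ for the \emph{random} Polyak--Ruppert average, I would first decouple the deviation with the triangle inequality,
\[
\abs{\bar{\theta}_n-(\theta^*+\alpha V)}\le \abs{\bar{\theta}_n-\esp{\bar{\theta}_n}}+\abs{\esp{\bar{\theta}_n}-(\theta^*+\alpha V)},
\]
splitting the problem into a deterministic \emph{bias} term and a random \emph{fluctuation} term.

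The deterministic term is controlled directly by Theorem~\ref{thrm:limit_N_refinement}. Each coordinate projection $\theta\mapsto\theta_i$ lies in $\cont[all]{5}{\Theta}$ (its derivatives beyond the first vanish and $\Theta$ is compact), so after rescaling by a constant it belongs to $\cont{5}{\Theta}$; since $\frac1n\sum_{k}\esp{\theta_k}=\esp{\bar{\theta}_n}$ by linearity, applying the theorem coordinatewise, unscaling, and recombining with the equivalence of norms on $\R^d$ yields $\limsup_{n\to\infty}\abs{\esp{\bar{\theta}_n}-(\theta^*+\alpha V)}\le C'\alpha^2$, the dimension-dependent constant being absorbed into $C'$. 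In particular, for $n$ large enough the deterministic term is at most $(C'+\tfrac12)\alpha^2$.

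For the fluctuation term I would invoke Chebyshev's inequality, $\Proba{\abs{\bar{\theta}_n-\esp{\bar{\theta}_n}}\ge \delta}\le \var{\bar{\theta}_n}/\delta^2$ with $\var{\bar{\theta}_n}:=\esp{\abs{\bar{\theta}_n-\esp{\bar{\theta}_n}}^2}$, choosing $\delta=\tfrac12\alpha^2$. Setting $C''=C'+1$ and combining with the deterministic bound gives, for $n$ large,
\[
\Proba{\abs{\bar{\theta}_n-(\theta^*+\alpha V)}\ge C''\alpha^2}\le \Proba{\abs{\bar{\theta}_n-\esp{\bar{\theta}_n}}\ge \tfrac12\alpha^2}\le \frac{4\var{\bar{\theta}_n}}{\alpha^4},
\]
so the entire statement reduces to proving $\var{\bar{\theta}_n}\to0$ as $n\to\infty$ for each fixed $\alpha$, the factor $\alpha^{-4}$ being harmless because $\alpha$ is held fixed while $n\to\infty$.

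The crux---and the main obstacle---is therefore the vanishing of the variance of the average. For fixed $\alpha$, the pair $(\theta_n,X_n)$ is a time-homogeneous Markov chain on the compact space $\Theta\times\calX$, and I would establish that it is geometrically ergodic: the exponential stability of $\theta^*$ granted by Assumption~\ref{A:attractor} lets one build a Foster--Lyapunov function $W$ satisfying, after averaging the increment $f(\theta_n,X_n)$ over the fast kernel $K(\theta_n)$, a drift inequality $\esp{W(\theta_{n+1})\mid\calF_n}\le (1-c\alpha)W(\theta_n)+\bO{\alpha^2}$, which together with a minorization on the recurrent region produces a geometric mixing rate $\rho_\alpha<1$. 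Geometric mixing makes the autocovariances summable, $\abs{\mathrm{Cov}(\theta_j,\theta_k)}\le C_\alpha\,\rho_\alpha^{\abs{k-j}}$, and since $\theta_n$ is bounded (as $\Theta$ is compact) this gives $\var{\bar{\theta}_n}\le \frac1{n^2}\sum_{j,k}\abs{\mathrm{Cov}(\theta_j,\theta_k)}=\bO{1/n}\to0$. An equivalent route bypasses the explicit rate: ergodicity yields $L^2$-convergence of $\bar{\theta}_n$ to the stationary mean $\E_{\mu_\alpha}[\theta]$, which simultaneously forces $\var{\bar{\theta}_n}\to0$ and, via Theorem~\ref{thrm:limit_N_refinement}, pins $\E_{\mu_\alpha}[\theta]$ to within $C'\alpha^2$ of $\theta^*+\alpha V$. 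The delicate point in either route is that the Markovian noise is $\theta$-dependent, so the one-step drift of $W$ is governed by $f(\theta_n,X_n)$ rather than by $\bar f(\theta_n)$ directly; controlling this gap requires averaging $X_n$ over $K(\theta_n)$, exactly the two-time-scale mechanism already exploited in the generator comparison behind Theorem~\ref{thrm:finite_n_bound}.
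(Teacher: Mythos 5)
Your reduction is fine as far as it goes: the triangle-inequality split, the coordinatewise application of Theorem~\ref{thrm:limit_N_refinement} to the bias term, and the Chebyshev step correctly reduce the theorem to the single claim $\lim_{n\to\infty}\var{\bar{\theta}_n}=0$ for fixed $\alpha$. The genuine gap is in your proof of that claim, which rests on premises the paper's assumptions do not supply. First, $(\theta_n,X_n)$ need not be a time-homogeneous Markov chain: Assumption~\ref{A:mart} pins down only the conditional \emph{mean} and \emph{covariance} of $M_{n+1}$ given $\calF_n$ (and $X_{n+1}$); the full conditional law of $M_{n+1}$ may depend on the entire history, so the starting point of both of your routes (Foster--Lyapunov/minorization, or unique ergodicity with $L^2$ ergodic averages) is not available. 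Second, even when the pair is Markov, geometric ergodicity in the Harris sense is generally false here: the martingale noise may be discrete or identically zero ($Q\equiv0$ is allowed), so there is no minorization on the continuous component, and Assumption~\ref{A:unichain} permits \emph{periodic} kernels $K(\theta)$ --- indeed the paper's own counterexample (the model $f(\theta,x)=2x-\theta$, $X_{n+1}=1-X_n$, $M\equiv0$ in Appendix~\ref{apx:additional_numerical}) is a purely deterministic, periodic chain satisfying all assumptions. Summable autocovariances with a rate $\rho_\alpha<1$ cannot be extracted from \ref{A:mart}--\ref{A:attractor}; any mixing-based argument would need hypotheses the paper deliberately avoids (this is precisely why Theorems~\ref{thrm:limit_N_refinement} and \ref{thrm:limit_N_refinement_proba} are stated for time averages).

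The paper's proof sidesteps mixing entirely, and the contrast is instructive. It observes that the generator-comparison bound behind Propositions~\ref{prop:Vn} and \ref{prop:V} holds \emph{conditionally} from an arbitrary stopping state: for any block length $K$ and $m$ large, $\frac1{K}\sum_{k=mK+1}^{(m+1)K}\esp{Y_k\mid\calF_{mK}}\le C(\alpha^2+\frac1{\alpha K})$ with $Y_k=(\theta_k-\theta^*-\alpha V)_i$, because Assumption~\ref{A:attractor} gives uniform attraction from \emph{every} $\theta\in\Theta$, so no stationarity or mixing is needed at the block start. The block averages $Z_m$ then form (after centering) a supermartingale with increments bounded by compactness of $\Theta$, and Azuma--Hoeffding with $K=1/\alpha^3$, $\varepsilon=C\alpha^2$ yields the one-sided tail bound, concluded by symmetry and a union bound over coordinates. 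If you want to salvage your Chebyshev route, you would need to prove $\var{\bar{\theta}_n}\to0$ by a similar conditional block argument (e.g., bounding $\esp{(\frac1M\sum_m Z_m - \esp{\frac1M\sum_m Z_m})^2}$ via the supermartingale structure) rather than via ergodicity of the underlying process, at which point you have essentially reconstructed the paper's proof.
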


This result is illustrated on a synthetic example whose parameters are given in Appendix~\ref{apx:additional_numerical_fig1}, and for which $\theta^*=1$. We run the stochastic approximation algorithm for various values of $\alpha$ and report the results in Figure~\ref{fig:illustration_bias_X_bar_1}.  In all cases, we use the same source of randomness (the values of the Markov chain $X_n$ and of the the Martingale noise $M_{n+1}$ are the same for all trajectories). We plot a sample trajectory of $\theta_n$, $\bar{\theta}_n$, and $\bar{\theta}_{n/2:n}$ defined\footnote{Note that if our theoretical results are presented for $\bar{\theta}_n$, they can be readily extended to $\bar{\theta}_{n/2:n}$.} as: 
\begin{align*}
    \bar{\theta}_{n/2:n}=\left\lceil \frac n2 \right\rceil \sum_{k=\floor{n/2}+1}^n\theta_k,
\end{align*}
for $\alpha\in\{0.001/4, 0.001/8, 0.001/16\}$ (more values of $\alpha$ are displayed in Appendix~\ref{apx:additional_numerical}). We observe that if $\theta_n$ is quite noisy, the Polyak-Ruppert averaging $\bar{\theta}_n$ or $\bar{\theta}_{n/2:n}$ are much closer to $\theta^*=1$, which is a clear advantage for $\bar{\theta}_{n/2:n}$ in terms of rate of convergence, especially for small values of $\alpha$.

\begin{figure}[ht]
    \centering
    \begin{tabular}{@{}c@{}c@{}c@{}}
        \includegraphics[width=0.33\textwidth]{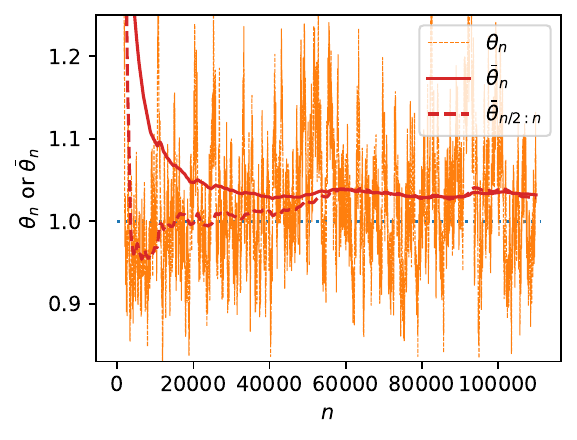}
        &\includegraphics[width=0.33\textwidth]{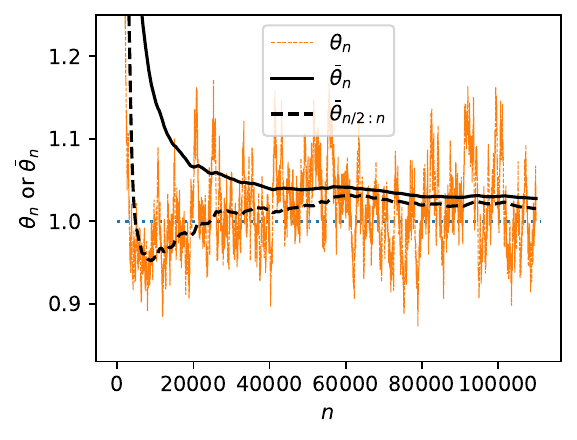}
        &\includegraphics[width=0.33\textwidth]{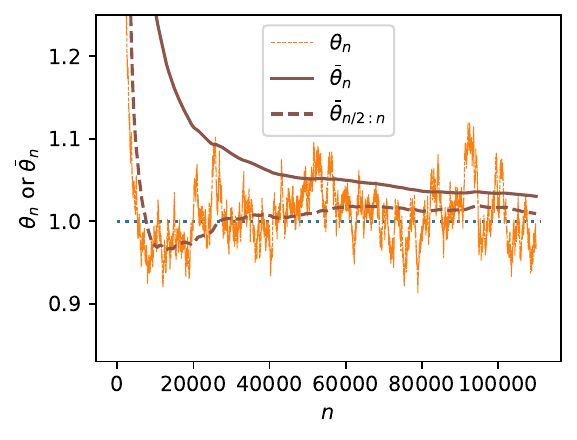}\\
        (a) $\alpha=0.0025$ (=$0.01/4$)
        &(b) $\alpha=0.00125$ (=$0.01/8$)
        &(c) $\alpha=0.000625$ (=$0.01/16$)
    \end{tabular}
    \caption{Comparison of $\theta_n$, $\bar{\theta}_n$ and $\bar{\theta}_{n/2:n}$ for various $\alpha$.}
    \label{fig:illustration_bias_X_bar_1}
\end{figure}

\subsection{The value of extrapolation: Illustration of Theorem~\ref{thrm:limit_N_refinement} and \ref{thrm:limit_N_refinement_proba}}

As we will see in the proof, the constant $V$ of Theorem~\ref{thrm:limit_N_refinement} can be expressed as a function of the problem's parameters, which allows one to construct a quantity $\theta^*+V$ that is a tight approximation of $\bar{\theta}_n$.  Yet, stochastic approximation algorithms are most used when one does not have access to the problem's parameter. Here, we illustrate how we can run two algorithms with two different step-sizes in order to obtain an algorithm that has both a fast convergence rate and a high precision. 

Let $\bar{\theta}_{n/2:n}^{(2\alpha)}$ and $\bar{\theta}_{n/2:n}^{(\alpha)}$ be two trajectories of the stochastic recurrence \eqref{eq:model} each with respective step-size $2\alpha$ and $\alpha$. By Theorem~\ref{thrm:limit_N_refinement}, for large $n$ we have: 
\begin{align*}
    \bar{\theta}_{n/2:n}^{(2\alpha)} &= \theta^* + 2V\alpha + \bO{\alpha^2}\\
    \bar{\theta}_{n/2:n}^{(\alpha)} &= \theta^* + V\alpha + \bO{\alpha^2}
\end{align*}
We see that we can suppress the term in $2V\alpha$ and $V\alpha$ by using a linear combination of $\bar{\theta}_{n/2:n}^{(2\alpha)}$ and $\bar{\theta}_{n/2:n}^{(\alpha)}$, which leads to an equation that has a distance to $\theta^*$ that is of order $\bO{\alpha^2}$:
\begin{align}
    \label{eq:extrapolation}
    2\bar{\theta}_{n/2:n}^{(\alpha)} - \bar{\theta}_{n/2:n}^{(2\alpha)} = \theta^* + \bO{\alpha^2}.
\end{align}

To explore the benefit of using the extrapolation \eqref{eq:extrapolation}, we plot in Figure~\ref{fig:illustration_bias_X_bar_2} the error of the averaged iterate\footnote{We only plot the error of $\bar{\theta}_{n/2:n}$ because it has the same limit as $\bar{\theta}_n$ but converges faster.} $\abs{\bar{\theta}_{n/2:n}-\theta^*}$, and the error of the extrapolation \eqref{eq:extrapolation} for various values of $\alpha=0.01*2^{-k}$, with $k=\{-1\dots 4\}$.  The parameters are the same as the one of Figure~\ref{fig:illustration_bias_X_bar_1} and the various $\alpha$s use the same source of randomness.  As we expect by the statement of the theorems, the error $\bar{\theta}_{n/2:n}-\theta^*$ is approximately equal to $V \alpha$ (the stars on the right are the theoretical values of $\lim_{n\to\infty}\abs{\bar{\theta}_{n/2:n}-\theta^*}$).  On the right panel, we observe that $2\bar{\theta}_n^{(\alpha)} - \bar{\theta}_n^{(2\alpha)}$ is closer to $\theta^*$ than the corresponding $\bar{\theta}_n^{(\alpha)}$ (the scale of the $y$-axis is the same for both panels). Note that even for $n=10^5$, the valus of $\alpha$ are still noisy.

\begin{figure}
    \begin{tabular}{@{}c@{}c@{}c@{}}
        \includegraphics[width=0.48\textwidth]{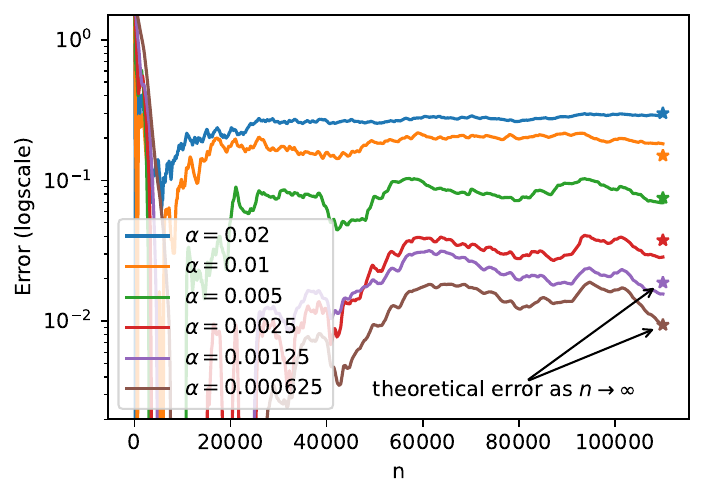}
        &\includegraphics[width=0.48\textwidth]{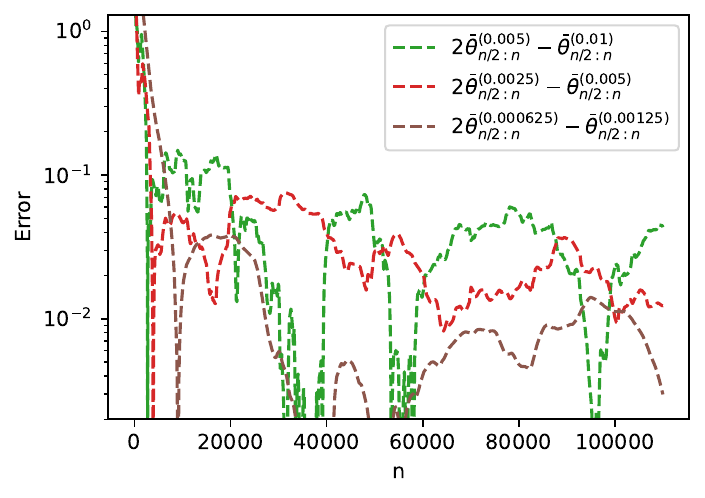} \\
        (a) Error $\abs{\bar{\theta}_{n/2:n}-\theta^*}$.
        &(b) Error of the extrapolation \eqref{eq:extrapolation}.
    \end{tabular}
    \caption{Illustration of the error of $\bar{\theta}_n := \frac1N \sum_{k=1}^{n} \theta_{k}$ for various values of $\alpha=0.02\times2^{-k}$ with $k\in\{0\dots5\}$ and of the error of the extrapolation \eqref{eq:extrapolation} for $\alpha=0.01$ and $\alpha=0.005$.}
    \label{fig:illustration_bias_X_bar_2}
\end{figure}

\section{Proof overview and generator method}
\label{sec:proof}

%(here we describe the "generator approach + how we treat the different terms)

\subsection{Proof overview}

The first idea of our proof is to construct new random variables $\varphi_{k}(\theta_{n-k})$ that corresponds to a system where we apply the stochastic recurrence \eqref{eq:model} for $k$ steps and then apply the deterministic recurrence \eqref{eq:discrete_rec} up to time $n$. By introducing these new variables and comparing their expectation for $k$ and $k+1$, we reduce our problem to a comparison of the generators of the stochastic and of the deterministic system. This step is done in Section~\ref{ssec:generators}.  In an independent lemma, we use Assumption~\ref{A:attractor} to show that the derivatives of the function $\varphi_{n}$ converge exponentially fast to $0$ as $n$ goes to infinity. This leads to Lemma~\ref{lemma:exponentially_small_derivatives} whose proof is technical and postponed to Section~\ref{sec:proof_lemma}.

Once these two basic points are obtained, we will use them to prove Proposition~\ref{prop:Vn} which shows that the expectation of $\theta_n$ is close to the solution of the deterministic ODE. More precisely, this proposition shows that there exists a bounded sequence of constants $V\toA_n$ and a constant $C>0$ such that for all $n\in \N$:
%\begin{align*}
    $\abs{\esp{h(\theta_n) - h(\phi_{n\alpha}(\theta_0))}- \alpha V\toA_n} \le C\alpha^2$. 
%\end{align*}
A main difficulty is to show that $V\toA_n$ is bounded. For that we will use Lemmas~\ref{lem:kind_of_telescopic_sum_bound} and \ref{lem:Poisson} that use properties of the Poisson Equation \eqref{eq:Poisson} to show that if the functions $g_k$ do not vary too much with $k$, then they can be replaced by their averaged versions: 
\begin{align}
    \label{eq:bound_on_averaging}
    \abs{\sum_{k=0}^{n-1} g_k(\theta_k,X_k) - \sum_{k=0}^{n-1}\esp{\bar{g}_k(\theta_k)}} \le C(1+ \alpha\sum_{k=0}^{n-1}\norm{g_{k+1}-g_k}).
\end{align}
Theorem~\ref{thrm:finite_n_bound} is a direct consequence of Proposition~\ref{prop:Vn}.  by using that $V\toA_n$ are bounded. 

The next step is to show that the time average versions of $V\toA_n$, $\overline{V}\toA_n=n^{-1}\sum_{k=1}^nV\toA_n$, converges (as $n$ goes to infinity) to a term $V\toA$. Here, the main technical difficulty is to obtain a refinement of the averaging property of \eqref{eq:bound_on_averaging}, which is done in Lemmas~\ref{lem:Poisson2} and \ref{lem:refined_barf}. We then use Theorem~\ref{thrm:finite_n_bound} and Lemma~\ref{lem:computable} to show that $V\toA\approx V+\bO{\alpha}$ where $V$ is given in Proposition~\ref{prop:V}. This gives Theorem~\ref{thrm:limit_N_refinement}. As a side-product, Lemma~\ref{lem:computable} also shows that the constant $V$ can be computed by solving a linear system.  The high-probability bound (Theorem~\ref{thrm:limit_N_refinement_proba}) is a consequence of Theorem~\ref{thrm:limit_N_refinement}.

The proof structure is illustrated in Figure~\ref{fig:proof_overview} that shows the dependencies between the lemmas.

\begin{figure}[htb]
    \begin{tikzpicture}[shorten >=2pt]
        \tikzstyle{lemma}=[text width=3cm, align=center, draw, rounded corners, fill=gray!3, opacity=.8]
        \node[lemma] at (0,-.3) (G) {Generator method (Section~\ref{ssec:generators})};
        \node[lemma,dashed, text width=5cm] at (-5,-0.2) (E) {Exponential decay of $\norm{\frac{\partial^i}{(\partial\theta)^i}\varphi_n}$ as $n$ grows (Lemma~\ref{lemma:exponentially_small_derivatives})};
        \node[lemma,dashed,text width=2cm] at (-7.5,-2) (T) {Telescopic sums (Lemma~\ref{lem:kind_of_telescopic_sum_bound})};
        \node[lemma,dashed] at (-4,-3.7) (A2) {Refined averaging (Lemmas~\ref{lem:Poisson2} and \ref{lem:refined_barf})};
        \node[lemma] at (0,-2) (Vn) {Derivation of $V\toA_{n}$ (Proposition~\ref{prop:Vn})};
        \node[lemma] at (0,-3.7) (V) {$\lim_{n\to\infty}\overline{V}\toA_{n}\approx V$ (Proposition~\ref{prop:V})};
        \node[lemma,text width=2.1cm] at (3.5,-2) (Th1) {Bias is $O(\alpha)$ (Theorem~\ref{thrm:finite_n_bound})};
        \node[lemma,text width=2.1cm] at (3.5,-3.7) (Th2) {Bias is $V\alpha + O(\alpha^2)$ (Theorem~\ref{thrm:limit_N_refinement})};
        \node[lemma,text width=3.1cm] at (3.5,-5.2) (Th3) {High probability bound (Theorem~\ref{thrm:limit_N_refinement_proba})};
        \node[lemma,dashed,text width=4cm] at (-4.5,-5.2) (comp) {$V$ is the solution of a linear system (Lemma~\ref{lem:computable})};
        \draw ([xshift=-0.5cm]E.south) edge[->, bend right=65] ([xshift=-1.3cm]A2.north);
        % \draw[->] (E) edge (A2);
        \node[lemma,dashed] at (-4,-2) (A) {Poisson Equation and averaging (Lemma~\ref{lem:Poisson})};
        %\node[lemma] at (0,-5.2) (last) {$V$ is computable.};
        \draw (G) edge[->] (Vn) (E.south east) edge[->] (Vn) (Vn) edge[->] (V); % (E) edge (V);
        \draw (A) edge[->] (Vn) (A.south east) edge[->] (V.north west) (T) edge[->] (A) ([xshift=-1.5cm]E.south) edge[->] (T) (E) edge[->] (A);
        \draw (A2) edge[->] (V);
        \draw (Vn) edge[->] (Th1) ;
        \draw (V) edge[->] (Th2)  ;
        \draw (Th2) edge[->] (Th3);
        \draw (comp.north east) edge[->] (V.south west) ;
        %\draw[->] (V) edge (last) ;
    \end{tikzpicture}
    \caption{Overview of the proof. The dashed rectangles indicate the lemmas that are proven in Appendix~\ref{sec:proof_lemma}.}
    \label{fig:proof_overview}
\end{figure}

\subsection{Deterministic recurrence and comparison of generators}
\label{ssec:generators}

To compare the stochastic variable $\theta_n$ and the solution of the ODE $\dot{\vartheta}=\bar{f}(\vartheta)$, we introduce a deterministic recurrence equation, that is a first-order discretization of the ODE. This recurrence equation is obtained by replacing the sources of randomness of \eqref{eq:model} by their expectation.  The stochastic approximation \eqref{eq:model} contains two sources of randomness: $M_{n+1}$ and $X_n$.  In our analysis, we will use a deterministic counterpart of \eqref{eq:model} that corresponds to setting the noise $M_{n+1}$ to $0$ and to using $\bar{f}(\theta)$ instead of $f(\theta_n,X_n)$. More precisely, for an initial value $\theta\in\Theta$ and $k\in\Z^+$, we define $\varphi_n(\theta)$ as:
\begin{align}
    \label{eq:discrete_rec}
    \varphi_{n+1}(\theta) &= \varphi_n(\theta) + \alpha \bar{f}(\varphi_n(\theta)),
\end{align}
with the convention that $\varphi_0(\theta)=\theta$.

Let $h:\Theta\to\R$ be an arbitrary function. By using the definition of the deterministic recurrence, for any $k\in\{0\dots n\}$, we introduce the variable 
\begin{align}
    \label{eq:z_n_k}
    z_{n,k} := \esp{h(\varphi_{n-k}(\theta_{k}))}.
\end{align}
The quantity $z_{n,k}$ is the expected value of a recurrence at time $n$ if one starts by applying the stochastic recurrence for the first $k$ steps and then the deterministic recurrence for the remaining $n-k$ steps. Our proof method consists in obtaining precise bounds on the difference $\esp{\theta_n}-\esp{\varphi_n(\theta_0)}$. By using the notation $z_{n,k}$, this quantity is equal to $z_{n,n} - z_{n,0}$. To obtain a bound on this quantity, we will use a trick, that essentially consists in comparing $z_{n,k+1} - z_{n,k}$. This quantity is simpler to analyze because the only modification between the two is to replace one stochastic transition by one deterministic transition. We can then recover the original bound by using that:
\begin{align}
    \esp{\theta_n} - \varphi_n(\theta_0) &= z_{n,n} - z_{n,0}
    = \sum_{k=0}^{n-1} z_{n,k+1} - z_{n,k}.\label{eq:z_diff}
\end{align}

This method is illustrated in Figure~\ref{fig:generator} on a synthetic example whose parameters are given in Appendix~\ref{apx:additional_numerical_fig2}. 
In the right panel, plot two functions (for two different values of $k$) that are equal to $\theta_n$ for $k\le n$ and to $\varphi_{n-k}(\theta_k)$ for $k>n$. 

\begin{figure}[ht]
    \centering
    \begin{tabular}{@{}c@{}c@{}}
        \includegraphics[width=.45\linewidth]{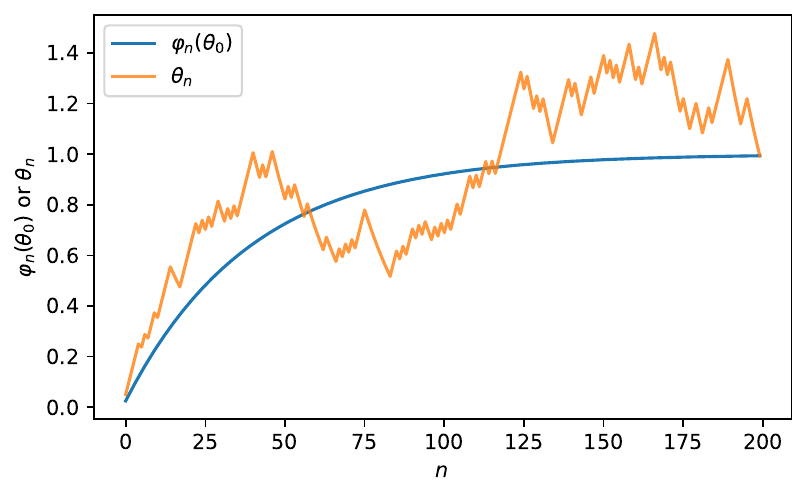}
        &\includegraphics[width=.45\linewidth]{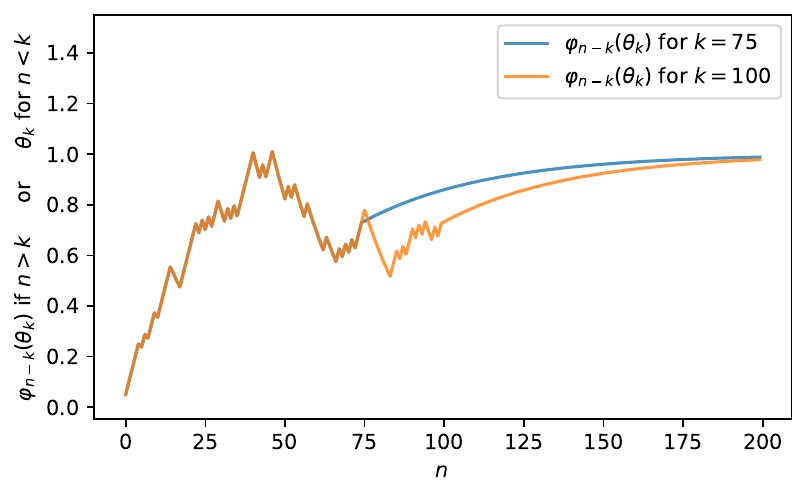}
        \\
        (a) Stochastic vs deterministic recurrences
        & (b) Impact of changing $k$ in $\varphi_{n-k}(\theta_{k})$
    \end{tabular}
    \caption{Illustration of the behavior of $\theta_n$, $\varphi_{n}(\theta_0)$ and $\varphi_{n-k}(\theta_k)$.}
    \label{fig:generator}
\end{figure}

\subsection{Derivation of \texorpdfstring{$V\toA_{n}$}{Vαn} and of $V$ by comparing the generators}

Using the above notations, we are now ready to prove the first proposition. 

\begin{proposition}
    \label{prop:Vn}
    Assume \ref{A:mart}--\ref{A:attractor}. Then, there exists a constant $C>0$ and $\alpha_0$ such that for all $n$, $\alpha\le\alpha_0$ and all $h\in\cont{3}{\Theta}$:
    \begin{align*}
        \abs{\esp{h(\theta_n) - h(\phi_{n\alpha}(\theta_0))} - \alpha V\toA_{n}} \le C\alpha^2,
    \end{align*}
    where the quantity $V\toA_{n}$ is given by:
    \begin{align*}
        V\toA_{n} := \sum_{k=0}^{n-1} &\E\Big[D(h\circ\varphi_{n-(k+1)})(\theta_{k})(f(\theta_{k},X_{k})-\barf(X_{k}))\\
        &\quad - D^2(h\circ\varphi_{n-(k+1)})(\theta_{k}) \left((f (\theta_{k},X_{k})-\barf(X_{k}))^{\otimes 2} + Q(\theta_{k},X_{k})\right)\Big].
    \end{align*}

    Moreover, $V\toA_n$ is bounded independently of $\alpha$ and $n$.  
\end{proposition}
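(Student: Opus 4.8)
The plan is to start from the telescoping identity \eqref{eq:z_diff}, expand each increment $z_{n,k+1}-z_{n,k}$ to second order in $\alpha$ along the deterministic flow, and then control the resulting sums with two ingredients: the exponential decay of the derivatives of $\varphi$ (Lemma~\ref{lemma:exponentially_small_derivatives}), which makes all the arising geometric sums convergent uniformly in $n$, and the averaging estimate \eqref{eq:bound_on_averaging} coming from the Poisson equation, which tames the leading fluctuation term.

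Concretely, I would fix $n$, set $G_k:=h\circ\varphi_{n-(k+1)}$, and use the semigroup property of the Euler recurrence \eqref{eq:discrete_rec}, namely $\varphi_{n-k}(\theta_k)=\varphi_{n-(k+1)}(\theta_k+\alpha\barf(\theta_k))$, to rewrite the increment as $z_{n,k+1}-z_{n,k}=\esp{G_k(\theta_k+\alpha(f(\theta_k,X_k)+M_{k+1}))-G_k(\theta_k+\alpha\barf(\theta_k))}$. Taylor-expanding both arguments around $\theta_k$ and taking conditional expectations given $\calF_k$, Assumption~\ref{A:mart} kills the first-order martingale contribution and turns $\esp{M_{k+1}^{\otimes2}\mid\calF_k}$ into $Q(\theta_k,X_k)$. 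This leaves, per step, a term of order $\alpha$ carrying the fluctuation $f(\theta_k,X_k)-\barf(\theta_k)$ against $DG_k$, a term of order $\alpha^2$ carrying $Q$ and the quadratic fluctuation against $D^2G_k$, and a third-order remainder bounded by $\alpha^3\normop[3]{G_k}$. Summing over $k$ reconstructs $\esp{h(\theta_n)}-h(\varphi_n(\theta_0))$, and a parallel telescoping between one Euler step and one exact ODE step (whose discrepancy is $\bO{\alpha^2}$ per step) accounts for the passage from $\varphi_n(\theta_0)$ to $\phi_{n\alpha}(\theta_0)$. Collecting the order-$\alpha$ and order-$\alpha^2$ pieces yields the claimed expression for $V\toA_n$ (the quadratic block being rewritten through $f^{\otimes2}-\barf^{\otimes2}=(f-\barf)^{\otimes2}+(\text{zero-mean terms})$, with the zero-mean remainder absorbed into the error).

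The quantitative backbone of the $\bO{\alpha^2}$ bound is Lemma~\ref{lemma:exponentially_small_derivatives}: since $\theta^*$ is an exponentially stable attractor (\ref{A:attractor}), one has $\normop[i]{\varphi_m}=\bO{e^{-b\alpha m}}$ and hence $\normop[i]{G_k}\le Ce^{-b\alpha(n-k-1)}$ for every $i\ge1$, so that $\sum_{k=0}^{n-1}\normop[i]{G_k}=\bO{1/\alpha}$ uniformly in $n$. This is exactly what converts the per-step remainder $\bO{\alpha^3}$ into a total remainder $\bO{\alpha^3}\cdot\bO{1/\alpha}=\bO{\alpha^2}$, and what makes the quadratic block and the Euler-versus-ODE correction each $\bO\alpha$: their explicit $\alpha^2$ prefactor multiplies a sum of size $\bO{1/\alpha}$. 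After extracting the global factor $\alpha$, these two blocks therefore contribute a bounded amount to $V\toA_n$, with no averaging argument needed.

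The main obstacle is the final claim that $V\toA_n$ is bounded uniformly in $n$ and $\alpha$, and the delicate piece is the order-$\alpha$ fluctuation block $\sum_k\esp{DG_k(\theta_k)(f(\theta_k,X_k)-\barf(\theta_k))}$: the crude bound $\sum_k\normop[1]{G_k}=\bO{1/\alpha}$ is far too weak. Here I would use that $x\mapsto f(\theta,x)-\barf(\theta)$ has zero mean under the stationary distribution $\pi(\theta)$ (\ref{A:unichain}), so the summand $g_k(\theta_k,X_k):=DG_k(\theta_k)(f(\theta_k,X_k)-\barf(\theta_k))$ has averaged version $\bar g_k\equiv0$. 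The averaging bound \eqref{eq:bound_on_averaging}, built on the Poisson equation via Lemmas~\ref{lem:kind_of_telescopic_sum_bound} and \ref{lem:Poisson}, then gives $\abs{\sum_k\esp{g_k(\theta_k,X_k)}}\le C\p{1+\alpha\sum_k\norm{g_{k+1}-g_k}}$. Since consecutive $G_k$ differ by a single Euler step, $\norm{DG_{k+1}-DG_k}=\bO{\alpha(\normop[1]{G_{k+1}}+\normop[2]{G_{k+1}})}=\bO{\alpha\,e^{-b\alpha(n-k)}}$, whence $\alpha\sum_k\norm{g_{k+1}-g_k}=\bO\alpha$ and the whole block is $\bO1$. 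Combining this with the bounded quadratic and Euler blocks establishes that $V\toA_n$ is bounded, and Theorem~\ref{thrm:finite_n_bound} then follows at once from the proposition.
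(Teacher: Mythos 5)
Your proposal follows the paper's proof essentially line for line: the telescoping identity \eqref{eq:z_diff}, the rewriting of each increment through the semigroup property $\varphi_{n-k}(\theta_k)=\varphi_{n-(k+1)}(\theta_k+\alpha\barf(\theta_k))$, the Taylor expansion with conditioning on $\calF_k$ to kill the first-order martingale term and produce $Q$, Lemma~\ref{lemma:exponentially_small_derivatives} to turn the per-step $\bO{\alpha^3}$ remainders and the $\alpha^2$-prefactored quadratic block into $\bO{\alpha^2}$ and $\bO{\alpha}$ totals via geometric sums, and the Poisson-equation averaging (Lemmas~\ref{lem:kind_of_telescopic_sum_bound} and \ref{lem:Poisson}, with $\bar g_k\equiv0$ and the increment bound $\norm{u_{k+1}-u_k}=\bO{\alpha e^{-b\alpha(n-k)}}$) for the delicate order-$\alpha$ fluctuation block --- this is exactly the paper's ``Case of $A$''. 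Up to and including the uniform boundedness of $V\toA_n$, your argument is the paper's argument.

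The genuine defect is your treatment of the passage from $\varphi_n(\theta_0)$ to $\phi_{n\alpha}(\theta_0)$. You correctly note that the per-step Euler discrepancy is $\bO{\alpha^2}$, but its cumulative size is then $\bO{\alpha^2}\cdot\bO{1/\alpha}=\bO{\alpha}$, and --- unlike the quadratic block --- this deterministic correction \emph{cannot} ``contribute a bounded amount to $V\toA_n$'': the quantity $V\toA_n$ is the explicit expression prescribed in the statement and contains no such term, and an $\bO{\alpha}$ leftover cannot be swallowed by the $C\alpha^2$ error either. The discrepancy is genuinely of order $\alpha$ at intermediate times: take $\barf(\theta)=-\theta$ with no noise, so that $\theta_n=\varphi_n(\theta_0)=(1-\alpha)^n\theta_0$ and $V\toA_n=0$; then $(1-\alpha)^n-e^{-n\alpha}\approx -e^{-n\alpha}\,n\alpha^2/2$, which is of order $\alpha$ when $n\alpha\approx1$. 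So this step of your bookkeeping fails. To be fair, the paper does not handle this step at all: its telescoping produces $h(\varphi_n(\theta_0))$ and the final display silently substitutes $h(\phi_{n\alpha}(\theta_0))$. The bound that the proof (yours and the paper's) actually establishes, uniformly in $n$, is the one with $\varphi_n(\theta_0)$ in place of $\phi_{n\alpha}(\theta_0)$; the downstream limsup statements such as \eqref{eq:bias_bound} are unaffected, since both $\varphi_n(\theta_0)$ and $\phi_{n\alpha}(\theta_0)$ converge to $\theta^*$ as $n\to\infty$.

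A smaller quantitative slip: expanding both arguments around $\theta_k$ separately, your quadratic block carries $f^{\otimes2}-\barf^{\otimes2}=(f-\barf)^{\otimes2}+(f-\barf)\otimes\barf+\barf\otimes(f-\barf)$, and you claim the cross terms are ``absorbed into the error \dots with no averaging argument needed.'' Crude bounds give only $\alpha^2\sum_k\norm{D^2G_k}=\bO{\alpha}$ for their total, which exceeds the allowed $C\alpha^2$; since the cross terms have zero stationary mean, they must also be run through Lemma~\ref{lem:Poisson} (with $i=2$) to obtain an $\bO{\alpha^2}$ total. (The paper sidesteps this by expanding the difference directly with increment $a-b$, though its stated identity with $(a-b)^{\otimes2}$ elides the same cross terms.)
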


\begin{proof}
    Following the definition of the stochastic and deterministic recurrences, the difference $z_{n, k} - z_{n, k+1}$ that appears in \eqref{eq:z_diff} is equal to 
    \begin{align}
        z_{n, k+1} - z_{n, k}  &= \esp{h(\varphi_{n-k}(\theta_{k})) - h(\varphi_{n-(k+1)}(\theta_{k+1}))}\nonumber\\
        &= \E\big[h\circ\varphi_{n-(k+1)}(\theta_{k} + \alpha \bar{f}(\theta_{k}))\nonumber\\
        &\qquad - h\circ\varphi_{n-(k+1)}(\theta_{k} + \alpha (f(\theta_{k},X_{k})+M_{k+1}))\big] \label{eq:before_taylor}
    \end{align}
    If $g:\R^d\to\R$ is a thrice differentiable function whose third derivative is bounded by $\norm{D^3g}$, by using a Taylor expansion, we have that $g(x+a) - g(x+ b) = Dg(x) (a-b) + D^2g(x) (a-b)^{\otimes2} + R$, where $R$ is a remainder term that is smaller than $\frac{c}{6}\norm{a-b}^3$. In our proof, we use this Taylor expansion with the function $g=h\circ\varphi_{n-(k+1)}$, $x=\theta_{k}$, $a=\alpha(f(\theta_{k},X_{k})+M_{k+1})$ and $b=\alpha\bar{f}(\theta_{k})$. Applying this to \eqref{eq:before_taylor} shows that 
    \begin{align}
        z_{n, k+1} - z_{n, k} = \alpha A_{n,k} + \frac{\alpha^2}{2} B_{n,k} + R_{n,k},
    \end{align}
    where $A_{n,k}$ and $B_{n,k}$ are equal to 
    \begin{align*}
        %\label{eq:def_r_two_time_refinement}
        A_{n,k} &:= \esp{D(h\circ\varphi_{n-(k+1)})(\theta_{k})(f(\theta_{k},X_{k})+M_{k+1}-\barf(X_{k}))}\\
        B_{n,k} &:= \esp{D^2(h\circ\varphi_{n-(k+1)})(\theta_{k}) (f (\theta_{k},X_{k})+M_{k+1}-\barf(X_{k}))^{\otimes 2}},
    \end{align*}
    and $R_{n,k}$ is a remainder term.

    By assumption \ref{A:mart}, the conditional expectation and variance of the martingale term are equal to $\esp{M_{k+1}\mid \calF_{k}} = 0$ and $\esp{M_{k+1}^{\otimes2}\mid \calF_{k}}=Q(\theta_{k},X_{k})$. By using the total law of expectation, this shows that $A_{n,k}$ and $B_{n,k}$ can be simplified as:
    \begin{align*}
        A_{n,k} &:= \esp{D(h\circ\varphi_{n-(k+1)})(\theta_{k})(f(\theta_{k},X_{k})-\barf(X_{k}))}\\
        B_{n,k} &:= \esp{D^2(h\circ\varphi_{n-(k+1)})(\theta_{k}) \left((f (\theta_{k},X_{k})-\barf(X_{k}))^{\otimes 2} + Q(\theta_{k},X_{k})\right)}.
    \end{align*}
    By definition, $V\toA_n = \sum_{k=0}^{n-1}(A_{n,k} + \alpha B_{n,k}/2)$. Hence, combining this with \eqref{eq:z_diff} shows that
    \begin{align*}
        \esp{h(\theta_n) - h(\phi_{n\alpha}(\theta_0))} &= V\toA_n + \sum_{k=0}^{n-1} R_{n,k}.
    \end{align*}
    To complete the proof, it remains to be shown that there exists a constant such that $\sum_{k=0}^{n-1} R_{n,k} \le C\alpha^2$ and that $V\toA_n$ is bounded independently of $\alpha$ and $n$. There are three terms, corresponding to $R$, $B$, and $\sum_{k=1}^n A_{n,k}$:\\
    \textbf{Case of $R$}. In Lemma~\ref{lemma:exponentially_small_derivatives}, we show that the norm of the derivatives of $h\circ\varphi_{n-k}$ are smaller than $c_1 e^{- c_2\alpha (n-k)}$. As the derivatives of $h$ are bounded by $1$, this implies there exists $c$ independent of $\alpha$ and $n$ such that $\norm{D^3(h\circ\varphi_{n-k})}\le ce^{-c_2\alpha (n-k)}$. Hence, there exists $c'>0$ such that: 
    \begin{align*}
        \sum_{k=0}^{n-1} \norm{R_{n,k}}\le c' \alpha^3 \sum_{k=0}^{n-1} e^{-c_2\alpha (n-k)} \le c' \alpha^3 \sum_{k=0}^\infty e^{-c_2\alpha k} \le c'c_3 \alpha^2.
    \end{align*}
        
    \textbf{Case of $B$}.     The proof that $\sum_{k=1}^n \alpha B_{n,k}$ is bounded independently of $n$ and $\alpha$ is similar since the factor $\alpha$ cancels out with the factor $1/\alpha$ that comes out of $\sum_{k=0}^\infty e^{-c_2\alpha k}$.

    \textbf{Case of $A$} Showing that $\sum_{k=1}^n A_{n,k}$ is bounded independently of $n$ and $\alpha$ is more subtle and uses the fact that $\barf$ is the ``averaged'' version of $f$ with respect to the transition kernel $K$.  The proof of this is a consequence of the first point of Lemma~\ref{lem:Poisson} that is stated in Section~\ref{sec:proof_lemma}. 
\end{proof}

The proof of the main results is then a consequence of the next proposition, whose proof is given in Appendix~\ref{apx:proof_theorems}. This proposition shows that the Cesaro limit of $V\toA_n$ is approximately equal to a term $V$ that does not depend on $\alpha$ plus a term of order $O(\alpha)$. 

\begin{proposition}
    \label{prop:V}
    Assume \ref{A:mart}--\ref{A:attractor}. Then, there exist a constant $C>0$ and $\alpha_0>0$ such that for all $h\in\cont{5}{\Theta}$, there exists a vector $V$ such that for all $n$, $\alpha\le\alpha_0$:
    \begin{align*}
        \abs{\overline{V}\toA_n - V} \le C(\alpha + \frac1{n\alpha}),
    \end{align*}
    where $\overline{V}\toA_n := n^{-1}\sum_{k=1}^{n}V\toA_k$ with $V\toA_k$ as in Proposition~\ref{prop:Vn}.
\end{proposition}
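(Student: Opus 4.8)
The goal is to show that the Cesàro average $\barv\toA_n = n^{-1}\sum_{k=1}^n V\toA_k$ converges (up to $O(\alpha)$ and $O(1/(n\alpha))$ errors) to a vector $V$ independent of $\alpha$. The plan is to start from the explicit formula for $V\toA_k$ in Proposition~\ref{prop:Vn}, which is a sum over $j$ of terms of the form $\esp{D(h\circ\varphi_{k-1-j})(\theta_j)\,g_j(\theta_j,X_j)}$ where $g_j := f-\barf$ has zero average under the stationary measure $\pi$, plus a second-derivative term involving $(f-\barf)^{\otimes2}+Q$. My first step would be to invoke the exponential decay of derivatives (Lemma~\ref{lemma:exponentially_small_derivatives}): the weight $D(h\circ\varphi_{k-1-j})$ and $D^2(h\circ\varphi_{k-1-j})$ decay like $e^{-c\alpha(k-1-j)}$, so the sum defining $V\toA_k$ is effectively supported on indices $j$ within $O(1/\alpha)$ of $k$, and its tail is negligible.

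\textbf{Main idea.} The first-derivative term in $V\toA_k$ features $g_j = f-\barf$, a function with zero mean under $\pi(\theta)$. Naively this looks $O(1)$, but because of the averaging it is actually small; this is exactly where the Poisson-equation machinery (the refined averaging Lemmas~\ref{lem:Poisson2} and~\ref{lem:refined_barf}, together with Lemma~\ref{lem:Poisson}) enters. The plan is to replace, inside each $V\toA_k$, the Markovian-noise functions by their stationary averages, picking up controlled error terms that depend on how fast the weighting function varies in the index. Since consecutive weights $D(h\circ\varphi_{k-1-j})$ and $D(h\circ\varphi_{k-2-j})$ differ by $O(\alpha)$ (one extra Euler step of size $\alpha$), the refined averaging bound of the form \eqref{eq:bound_on_averaging} controls the replacement error at the level needed. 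After this reduction, each $V\toA_k$ becomes, up to $O(\alpha)$, a quantity expressible through the averaged functions $\barf$, $\bar{Q}$ and the derivatives of the deterministic flow $\varphi$.

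\textbf{Passing to the limit.} Once $V\toA_k$ is rewritten in terms of deterministic flow derivatives, I would analyze the Cesàro average $n^{-1}\sum_{k=1}^n V\toA_k$. The weights $D^i(h\circ\varphi_{k-1-j})$ depend on $k$ and $j$ only through the difference $m=k-1-j$, so after reindexing, the average over $k$ of the inner sums becomes an average over shifts of a fixed, exponentially-decaying kernel. As $n\to\infty$ the boundary effects contribute $O(1/(n\alpha))$ (the $1/\alpha$ arising because the kernel has width $O(1/\alpha)$), which is precisely the second error term in the statement. The bulk of the Cesàro sum converges to a fixed expression $V$, which one identifies by replacing $\theta_j$ by its limit $\theta^*$ and $\varphi_m$ by the flow based at $\theta^*$; the difference between $\theta_j$ and $\theta^*$ contributes another $O(\alpha)$ via Theorem~\ref{thrm:finite_n_bound} and the exponential stability of Assumption~\ref{A:attractor}.

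\textbf{Expected obstacle.} The hard part will be the first-derivative term $A$: bounding $\sum_j \esp{D(h\circ\varphi_{k-1-j})(\theta_j)(f-\barf)(\theta_j,X_j)}$ with the right $O(\alpha)$ accuracy, rather than merely showing it is $O(1)$. This is where the refinement of the averaging identity \eqref{eq:bound_on_averaging} is essential: a crude application only yields boundedness (enough for Proposition~\ref{prop:Vn} and Theorem~\ref{thrm:finite_n_bound}), whereas here I need the \emph{value} of the limit to second order, which forces a Poisson-equation decomposition of $f-\barf$ and careful tracking of how the solution of the Poisson equation interacts with the slowly-varying weights. Keeping the two error scales $\alpha$ and $1/(n\alpha)$ separate throughout — and verifying that the $O(\alpha^2)$-type remainders summed over $O(1/\alpha)$ effective indices still produce only $O(\alpha)$ — is the delicate bookkeeping that the proof must get right.
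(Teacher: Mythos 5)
Your plan follows essentially the same route as the paper's proof: split $V\toA_n$ (from Proposition~\ref{prop:Vn}) into the first-derivative term $A$ --- handled via the Poisson-equation refined averaging of Lemmas~\ref{lem:Poisson2} and~\ref{lem:refined_barf}, with exactly the Cesàro reindexing over the shift $m=k-1-j$, the exponentially decaying weights of Lemma~\ref{lemma:exponentially_small_derivatives}, and the boundary/tail terms --- and the $\alpha$-weighted second-derivative term $B$, handled by the cruder Lemma~\ref{lem:Poisson}, then localize at $\theta^*$ via Theorem~\ref{thrm:finite_n_bound}. The only step you compress is the last one: after these replacements the limit $V\toA$ still depends on $\alpha$ through the Euler map $\varphi$, and the paper devotes Lemma~\ref{lem:computable} (geometric sums and a Sylvester equation) to turning it into an $\alpha$-independent $V$ at cost $O(\alpha)$ --- your phrase ``replacing $\varphi_m$ by the flow based at $\theta^*$'' is precisely that step, stated without the supporting computation.
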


\section{Conclusion / discussion}

In this paper, we presented an analysis of the bias of constant-step size algorithms with Markov noise. We developed a novel technic, based on generator comparison, that allows to obtain a very fine comparison between the expectation of the stochastic trajectory and the value of its  deterministic counterpart.  This methodology is quite generic and we believe that it could easily be adapted to obtain more general results. We are in particularly targeting: relaxing the bounded of $\Theta$, and obtaining non-asymptotic results.

\section{Acknowledgements}

The authors would like to thank the three anonymous reviewers for their insightful comments about the paper. This work was supported by the ANR (Agence National de la Recherche), via the project REFINO (ANR-19-CE23-0015).

\bibliographystyle{plain}
\bibliography{biblio}

\newpage 

\appendix

\section*{Appendix}

\section{Additional numerical results}
\label{apx:additional_numerical}

\subsection{Reproducibility}

All parameters given in this section should suffice to reproduce all figures.  The total computation time to obtain all figures of the paper does not exceeds a few minutes (on a 2018 laptop, all implementation being done in Python/Numpy/Matplotlib). To ensure reproducibility, the code necessary to reproduce the paper (including latex, code to run the simulations and code to plot the figures) is available at \url{https://github.com/ngast/paper_bias_stochastic_approximation2024}.

\subsection{Parameters for the example of Figure~\ref{fig:illustration_bias_X_bar_1} and Figure~\ref{fig:illustration_bias_X_bar_2}}
\label{apx:additional_numerical_fig1}

For Figure~\ref{fig:illustration_bias_X_bar_1} and \ref{fig:illustration_bias_X_bar_2}, the state space of the Markovian part is $\calX = \{0,1\}$, and the transition matrix is
\begin{align*}
    K(\theta) =
    \begin{pmatrix}
        \sin(\theta)^2 & \cos(\theta)^2 \\
        \cos(\theta)^2 & \sin(\theta)^2
    \end{pmatrix}.    
\end{align*}
The drift is $f(\theta, x)=\theta-2x$. The martingale noise $M_{n+1}$ is a sequence of \emph{i.i.d.} random variables with $\Proba{M_{n+1}=1\mid\calF_n}=\Proba{M_{n+1}=-1\mid\calF_n}=0.5$. One can show that $\theta^*=1$.  The initial value of the algorithm is set to $\theta_0=3, X_0=0$. 

In Figure~\ref{fig:more_trajs}, we complete the results of Figure~\ref{fig:illustration_bias_X_bar_1} by showing more values of $\alpha$. We observe that in all cases, the averaged values $\bar{\theta}_n$ and $\bar{\theta}_{n/2:n}$ are much close to $\theta^*=1$ than the original $\theta_n$ that is very noisy. This explain why using a Polyak-Ruppert averaging is important. Also, if for large values of $\alpha\in\{0.02,0.01,0.005\}$, the two quantities $\bar{\theta}_n$ and $\bar{\theta}_{n/2:n}$ have similar behavior, the situation is quite different when $\alpha$ gets smaller. In the latter case, the convergence of $\bar{\theta}_{n/2:n}$ is much faster than the one of $\bar{\theta}_{n}$ because of the transient behavior of $\theta_n$ at the beginning. This is why we use $\bar{\theta}_{n/2:n}$ in our simulations.
\begin{figure}[ht]
    \centering
    \includegraphics[width=\linewidth]{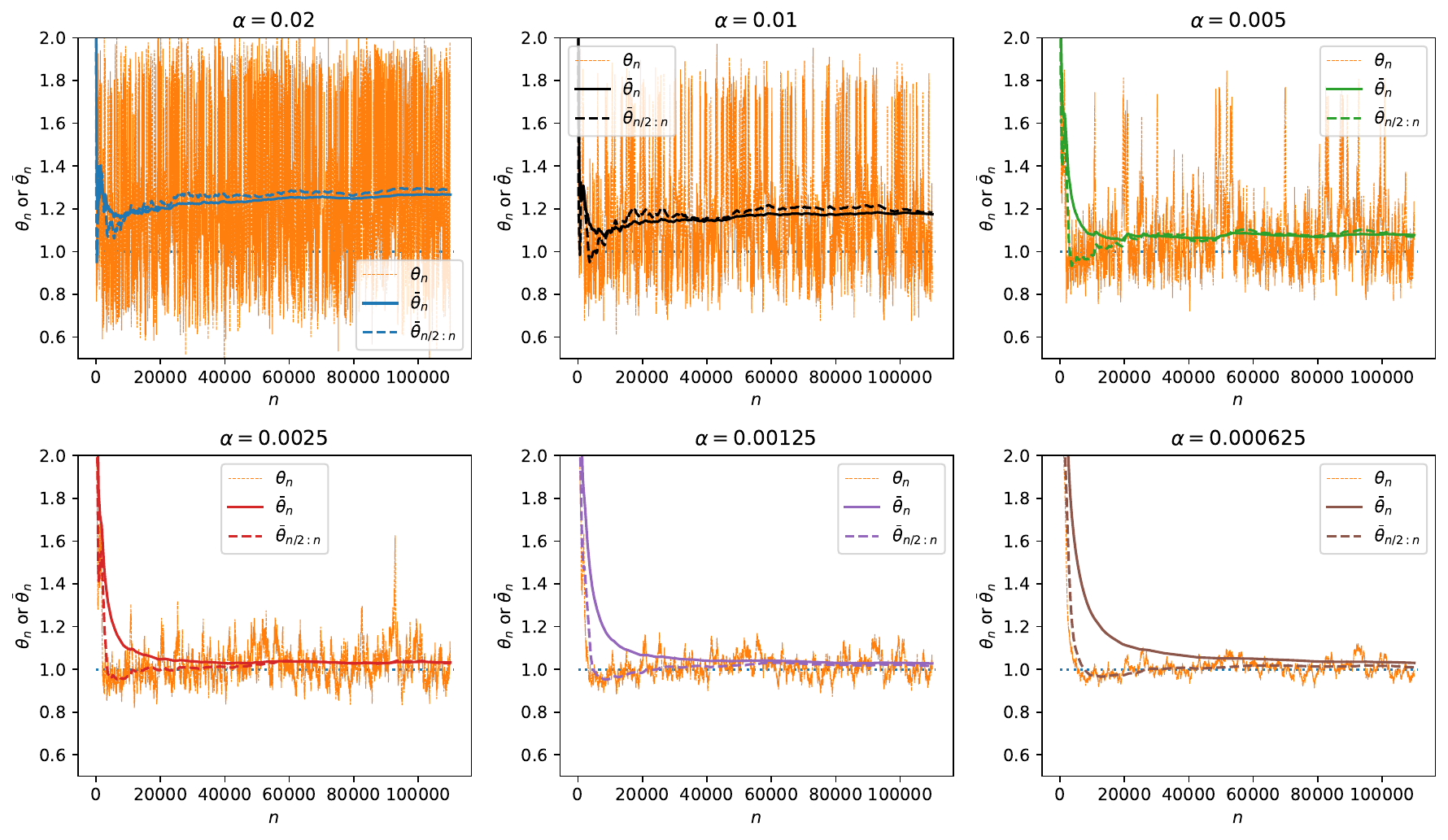}
    \caption{Behavior of $\theta_n$, $\bar{\theta}_n$ and $\bar{\theta}_{n/2:n}$ for various values of $\alpha$. All $y$-axis have the same scale. }
    \label{fig:more_trajs}
\end{figure}

\subsection{Parameters for Figure~\ref{fig:generator}}
\label{apx:additional_numerical_fig2}

For Figure~\ref{fig:generator},  the state space of the Markovian part is $\calX = \{0,1\}$ and the transition matrix is
\begin{align*}
    K(\theta) =
    \begin{pmatrix}
        0.5 & 0.5 \\
        0.5 & 0.5
    \end{pmatrix}.
\end{align*}
The drift is $f(\theta,x)=1 - (1+\theta)x$. The initial state is $\theta_0=X_0=0$ and there is no martingale noise\footnote{Note that because of the form of the matrix $K(\theta)$, the process $X_n$ is memoryless. Hence, one could build the same model with a martingale noise instead of the variable $X$.}. The step-size is set to $\alpha=0.05$.

\subsection{Why is Theorem~\ref{thrm:finite_n_bound} for \texorpdfstring{$\theta_n$}{θn} and Theorem~\ref{thrm:limit_N_refinement} and \ref{thrm:limit_N_refinement_proba} for \texorpdfstring{$\bar{\theta}_n$}{bar(θ)n}?}

Note that compared to \eqref{eq:bias_bound} of Theorem \ref{thrm:finite_n_bound}, the result of \eqref{eq:refinement_statement_accuracy_lim_N} in Theorem \ref{thrm:limit_N_refinement} is stated for the averaged iterates $\frac{1}{N}\sum_{n=1}^N \esp{\theta_n}$ and not directly for $\theta_n$. In fact, there are many cases for which \eqref{eq:refinement_statement_accuracy_lim_N} is also valid for $\theta_n$. In particular, if $\esp{h(\theta_n)}$ converges as $n$ goes to infinity, then one necessarily has:
\begin{align*}
    \limsup_{n\to\infty}\abs{\esp{h(\theta_{n})} - h(\theta^*) - \alpha V} & \leq \alpha^2 C'. 
\end{align*}
This is for instance the case if $(\theta_n,X_n)$ is an ergodic Markov chain. 
Yet, there are cases for which the result of \eqref{eq:refinement_statement_accuracy_lim_N} only holds for the averaged iterates. This occurs typically when $X_n$ has a periodic behavior, in which case the averaged iterates eliminate eventual oscillations caused by periodic noise sequences.  To illustrate this, we consider a one-dimensional model with $f(\theta, x)=2 x - \theta$, $X_{n+1}=1-X_n$ and $M_{n+1}=0$.  This model satisfies all assumptions of the problems and $\theta^*=1$. The initial conditions are set to $\theta_0=0.85$, and $X_0=0$. The system is therefore deterministic. In Figure~\ref{fig:example_periodic_y} we plot the trajectories of $\theta_n$ and of $\bar{\theta}^{(\alpha)}_N :=\frac1N \sum_{n=1}^{N} \theta_{n}$ for two stepsize values: $\alpha=0.05$ and $\alpha=0.1$. We observe that, as stated by Theorem~\ref{thrm:finite_n_bound}, the smaller is $\alpha$, the closer is $\lim_{n\to\infty}\theta_{n}$ from $\theta^*$. One can also observe that $(\theta_n - 1)/\alpha$ oscillates and does not converge as $n$ goes to infinity, contrary to $(\bar{\theta}_n-1)/\alpha$ that does converge (to $0$ here).

\begin{figure}[ht]
    \centering
    \begin{tabular}{@{}c@{}c@{}}
        \includegraphics[width=0.45\linewidth]{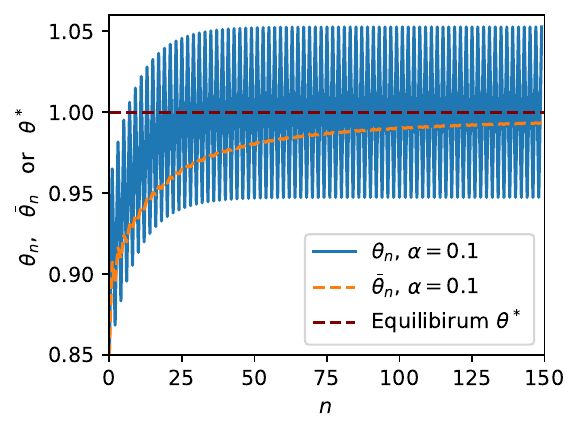}
        &\includegraphics[width=0.45\linewidth]{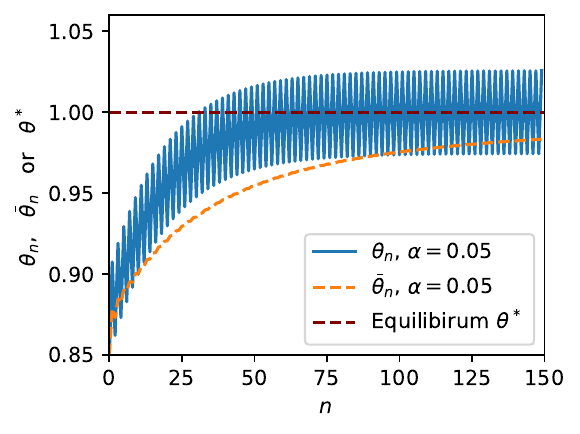}\\
        (a) $\alpha=0.1$ & (b) $\alpha=0.05$
    \end{tabular}
    \caption{Example of a model with a periodic $X_n$ illustrating the necessity of averaging the iterates. }
    \label{fig:example_periodic_y}
\end{figure}

\section{Proof of the main results}
\label{apx:proof_theorems}

\subsection{Proof of Proposition~\ref{prop:V}}
\label{ssec:proof_prop:V}

In this section, we prove Proposition~\ref{prop:V} that states, that under\ref{A:mart}--\ref{A:attractor}, there exist a constant $C>0$ and $\alpha_0>0$ such that for all $h\in\cont{5}{\Theta}$, there exists a vector $V$ such that for all $n$, $\alpha\le\alpha_0$:
\begin{align*}
    \abs{\overline{V}\toA_n - V} \le C(\alpha + \frac1{n\alpha}),
\end{align*}
where $\overline{V}\toA_n := n^{-1}\sum_{k=1}^{n}V\toA_k$ with $V\toA_k$ as in Proposition~\ref{prop:Vn}.

This proposition shows that the Cesaro limit of the constants $V\toA_{n}$ defined in Proposition~\ref{prop:Vn} converges to a limit that is equal to $V+\bO{\alpha}$. In particular, this result implies that
\begin{align*}
    \esp{\frac1k\sum_{k=1}^n h(\theta_n)} = h(\theta^*) + \alpha V + \bO{\alpha^2 + \frac{1}{n\alpha}}.
\end{align*}

\begin{proof}[Proof of Proposition~\ref{prop:V}]
    Recall from the proof of Proposition~\ref{prop:Vn} that $V\toA_n=\sum_{k=0}^{n-1}\esp{A_{n,k} + \alpha B_{n,k}/2}$. We first treat the term $\sum_{k=0}^{n-1}B_{n,k}$ that is the easiest (essentially because it is multiplied by a factor $\alpha$). Applying Lemma~\ref{lem:Poisson} to the function $g(\theta,X):=(f (\theta,X)-\barf(X))^{\otimes 2} + Q(\theta,X)$ shows that: 
    \begin{align*}
        \sum_{k=0}^{n-1}\esp{B_{n,k}} &= \sum_{k=0}^{n-1}\esp{D^2(h\circ\varphi_{n-(k+1)})(\theta_k)g(\theta_k,X_k)}\\
        &= \sum_{k=0}^{n-1} \esp{D^2(h\circ\varphi_{n-(k+1))})(\theta_k)\bar{g}(\theta_k)} + \bO{1}.
    \end{align*}
    By applying Theorem~\ref{thrm:finite_n_bound}, this shows that
    \begin{align}
        \label{eq:limit_B_n_k}
        \lim_{n\to\infty} \sum_{k=0}^{n-1}\esp{B_{n,k}} &= \sum_{k=0}^{\infty} D^2(h\circ\varphi_{k})(\theta^*)\bar{g}(\theta^*) + \bO{1},
    \end{align}
    where $\bar{g}(\theta)=\sum_{x\in\calX}\pi_x(\theta)\bigl((f (\theta,x)-\barf(x))^{\otimes 2} + Q(\theta,x)\bigr)$. 

    The treatment of $A_{n,k}$ requires more work, and in particular one can show that $\sum_{k=0}^{n-1}A_{n,k}$ does not necessarily converge as $n$ goes to infinity\footnote{This is illustrated by Figure~\ref{fig:example_periodic_y}.}. This is why we study the convergence of $\bar{A}_{N} = \sum_{n=1}^N\sum_{k=0}^{n-1}A_{n,k}$ and not the one of $A_{n,k}$. By applying Lemma~\ref{lem:refined_barf} with $g=f$ and then applying Theorem~\ref{thrm:finite_n_bound}, it holds that $\lim_{N\to\infty}\abs{\bar{A}_N + \alpha m(\theta^*)}\le C\alpha$, with $m(\theta^*)$ equal to 
    \begin{align*}
        m(\theta^*) = \sum_{x,x'\in\calX} \sum_{n=0}^\infty D(D(h\circ\phi_n)(\theta^*) G_f(\theta^*, x'))(f(\theta^*,x)+R(\theta^*,x,x'))K_{x,x'}(\theta^*)\pi_x(\theta^*)
    \end{align*}
    Note that to apply Theorem~\ref{thrm:finite_n_bound}, we need $m$ to be three times differentiable which is why we need $h$ to be four times differentiable. 

    Let $V\toA = \alpha(\sum_{k=0}^{\infty} D^2(h\circ\varphi_{k})(\theta^*)\bar{g}(\theta^*) + m(\theta^*))$. By using Lemma~\ref{lem:computable}, there exists a vector $V$ such that $\norm{V\toA-V}\le C\alpha$.
\end{proof}

\subsection{Proof of Theorem~\ref{thrm:limit_N_refinement_proba} (High-probability bound)}
\label{ssec:high-proba}

Fix some $i\in\{1\dots d\}$ and let $Y_n := (\theta_n-\theta^*-\alpha V)_i$, be the $i$th component of the vector where the constant $V$ is the one defined in Proposition~\ref{prop:V} when applied to the function $h(\theta)=\theta-\theta^*$.  By Proposition~\ref{prop:Vn} and Proposition~\ref{prop:V}, there exists $C>0$ such that for $n$ large enough:
\begin{align*}
    \frac1n \sum_{k=1}^n \esp{Y_{k}}  &\le C (\alpha^2 + \frac{1}{\alpha n} )
\end{align*}
It is easy to modify the proofs of those propositions (by adding a conditioning on $\calF_{mK}$ and starting the summation at $mK+1$) to show that for all $K$ and for $m$ large enough:
\begin{align*}
    \frac1{K} \sum_{k=mK+1}^{(m+1)K} \esp{Y_{k}\mid \calF_{mK}}  &\le C (\alpha^2 + \frac{1}{\alpha K} ).
\end{align*}
Let $Z_m= \frac1{K} \sum_{k=mK+1}^{(m+1)K} Y_m $. The above result shows that $\sum_{m=1}^M Z_m - C M(\alpha^2 + \frac{1}{\alpha K} )$ is a super-martingale. Moreover, as $\Theta$ is bounded, there exists $C'>0$ (independent of $C$ and $\alpha$) such that $|Z_n- C (\alpha^2 + \frac{1}{\alpha K} )|\le C'$ (almost surely). Hence, by using Azuma-Hoeffding inequality, for all $\varepsilon$:
\begin{align*}
    \Proba{ \frac1M \sum_{m=1}^M Z_m \ge C (\alpha^2 + \frac{1}{\alpha K} ) + \varepsilon} \le e^{- M \epsilon^2 / C }.
\end{align*}
By setting $K=1/\alpha^3$, $\varepsilon=C \alpha^2$, and taking the limit as $M$ goes to infinity, this implies that: 
\begin{align*}
    \lim_{M\to\infty }\Proba{ \frac1M \sum_{m=1}^M Z_m \ge 3 C \alpha^2 } = 0.
\end{align*}
By using that $\frac1n \sum_{k=1}^n Y_k= \frac1M \sum_{m=1}^M Z_m + O(1/M)$, with $M=\lfloor M/K\rfloor$, it follows that for all $\in\{1\dots d\}$:
\begin{align*}
    \lim_{n\to\infty}\Proba{ (\bar{\theta}_n-(\theta^*+\alpha V))_i \ge 3C\alpha^2} = 0.
\end{align*}
A symmetric proof shows that $\lim_{n\to\infty}\Proba{ (\bar{\theta}_n-(\theta^*+\alpha V))_i \le -3C\alpha^2} = 0$. The theorem then follows by using a union bound on all $i$. 

This shows that the constant of Theorem~\ref{thrm:limit_N_refinement_proba} can be chosen as $C''=3C$ where $C$ is the constant of Theorem~\ref{thrm:finite_n_bound}. By choosing more carefully the values of $K$ and $\varepsilon$, this constant can be reduced to $C''=(1+\delta)C$ for all $\delta>0$.

\section{Technical lemmas}
\label{sec:proof_lemma}

This section contains the technical lemmas that are used in the proof of the main theorems. This section is divided in two parts. We first show in Lemma~\ref{lemma:exponentially_small_derivatives} that the derivatives of $\varphi_{n}$ are exponentially small as $n$ goes to infinity. Using this exponential convergence, we then prove Lemma~\ref{lem:kind_of_telescopic_sum_bound} that provides bound for an \emph{almost telescopic} sum that appears in the proof of Lemma~\ref{lem:Poisson}. The latter shows how to bound the difference between $f(\theta_n,X_n)$ and $\bar{f}(\theta_n)$ by using a Poisson approach.

\subsection{Exponential decay of the derivatives of \texorpdfstring{$\varphi_n$}{φn}}

The first lemma is at the core of our analysis. It shows that if the ODE has a unique attractor (as imposed by \ref{A:attractor}), then for $\alpha$ small enough, the derivatives of the discrete-time recurrence $\varphi_n$ are exponentially small as $n$ goes to infinity. 

\begin{lemma}
    \label{lemma:exponentially_small_derivatives}
    Assume \ref{A:cont}-\ref{A:attractor}. There exist $\alpha_0>0$ and constants $c_1,c_2,c_3,c_4>0$ such that for all $\alpha<\alpha_0$, $i\in\{1,2,3\}$ and $n$, 
    \begin{enumerate}[label=(\roman*)]
        \item $\sum_{n=0}^\infty e^{-c_2\alpha n} \le c_3/\alpha$.
        \item For all $h\in\cont{i}{\Theta}$, the $i$th derivative of $h\circ\varphi_n$ satisfies:
        \begin{align*}
            \norm{D^i(h\circ\varphi_n)} \le c_1 e^{-c_2\alpha n},
        \end{align*}
        \end{enumerate} 
\end{lemma}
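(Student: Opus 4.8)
The plan is to dispatch (i) as a geometric series and reduce (ii) to controlling the derivatives of the map $\varphi_n$ itself. For (i), $\sum_{n\ge0}e^{-c_2\alpha n}=(1-e^{-c_2\alpha})^{-1}\le 2/(c_2\alpha)$ for $\alpha$ small, giving $c_3=2/c_2$. For (ii), write $\varphi_{n+1}=F_\alpha\circ\varphi_n$ with $F_\alpha(\theta)=\theta+\alpha\barf(\theta)$. The chain rule (Fa\`a di Bruno) expresses $D^i(h\circ\varphi_n)$ as a sum of terms of the form $D^jh(\varphi_n)$ applied to products of derivatives $D^{i_1}\varphi_n,\dots,D^{i_j}\varphi_n$ with $i_1+\dots+i_j=i$. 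Since $h\in\cont{i}{\Theta}$ bounds $\norm{D^jh}\le1$, it suffices to prove that each $D^i\varphi_n$ ($i\in\{1,2,3\}$) has operator norm at most $c\,e^{-c_2\alpha n}$; the chain-rule products then only improve the rate. So the whole lemma reduces to showing that the first three derivatives of $\varphi_n$ decay like $e^{-c_2\alpha n}$.

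The heart of the argument is a uniform contraction estimate for the Jacobians. Let $J_n(\theta)=D\varphi_n(\theta)$, which obeys $J_{n+1}=(I+\alpha D\barf(\varphi_n(\theta)))J_n$, and define the ordered transition product $\Psi_{n,m}(\theta)=(I+\alpha D\barf(\varphi_{n-1}))\cdots(I+\alpha D\barf(\varphi_{m}))$ (with $\Psi_{m,m}=I$), so $J_n=\Psi_{n,0}$. I would show there exist $c_1',c_2>0$ and $\alpha_0$ with $\norm{\Psi_{n,m}(\theta)}\le c_1'e^{-c_2\alpha(n-m)}$ for all $\theta\in\Theta$, $m\le n$, $\alpha<\alpha_0$. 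This combines a local and a global ingredient. Locally, since $A:=D\barf(\theta^*)$ is Hurwitz (Assumption~\ref{A:attractor}), there is a positive-definite $P$ and $\beta>0$ with $A^{T}P+PA\preceq-2\beta P$; the induced norm $\norm{\cdot}_P$ then gives $\norm{I+\alpha A}_P^2\le1-2\beta\alpha+\bO{\alpha^2}$, so by continuity of $D\barf$ there is a $\norm{\cdot}_P$-ball $\calN$ around $\theta^*$ on which $\norm{I+\alpha D\barf(\theta)}_P\le e^{-c_2\alpha}$ for small $\alpha$ and some $0<c_2<\beta$. Taking $\calN$ to be such a ball also makes it forward-invariant for $F_\alpha$ (since $F_\alpha(\theta^*)=\theta^*$ and $F_\alpha$ is a $\norm{\cdot}_P$-contraction on the convex set $\calN$).

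Globally, Assumption~\ref{A:attractor} together with compactness of $\Theta$ gives a finite time $T$ with $\phi_T(\theta)$ in the interior of $\calN$ for every $\theta\in\Theta$; since $\varphi_n$ is the explicit-Euler discretization of the flow $\phi_{n\alpha}$ and $\barf$ has bounded derivatives on the relevant compact set, the standard $\bO{\alpha}$ global-error bound over the bounded horizon $[0,T]$ guarantees $\varphi_{N_0}(\theta)\in\calN$ for $N_0=\lceil T/\alpha\rceil$ and $\alpha$ small, after which the iterates stay in $\calN$ by forward-invariance. Splitting $\Psi_{n,m}$ into the at-most-$N_0$ ``transient'' factors (each bounded by $1+\alpha\norm{D\barf}$, product $\le e^{\norm{D\barf}T}$) and the remaining factors inside $\calN$ (each $\le e^{-c_2\alpha}$ in $\norm{\cdot}_P$), and passing between $\norm{\cdot}$ and $\norm{\cdot}_P$ at the cost of an equivalence constant, yields the claimed bound with $c_1'$ absorbing $e^{(\norm{D\barf}+c_2)T}$. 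In particular $\norm{J_n}\le c_1e^{-c_2\alpha n}$, which is (ii) for $i=1$.

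For $i=2,3$ I would differentiate the recursion $\varphi_{n+1}=F_\alpha\circ\varphi_n$ once or twice more. Each $D^i\varphi_n$ satisfies an affine recursion $D^i\varphi_{n+1}=(I+\alpha D\barf(\varphi_n))\,D^i\varphi_n+G^{(i)}_n$, where the forcing $G^{(i)}_n$ is a sum of terms built from $D^{\ge2}F_\alpha=\alpha D^{\ge2}\barf$ and the already-controlled lower-order derivatives $J_n,D^2\varphi_n$; crucially, every such term carries an explicit factor $\alpha$ and is bounded by $\alpha\,c\,e^{-2c_2\alpha n}$ by induction on $i$. Solving by the discrete variation-of-constants formula $D^i\varphi_n=\sum_{m=0}^{n-1}\Psi_{n,m+1}G^{(i)}_m$ and inserting the contraction bound gives
\begin{align*}
    \norm{D^i\varphi_n}\le\sum_{m=0}^{n-1}c_1'e^{-c_2\alpha(n-m-1)}\cdot\alpha c\,e^{-2c_2\alpha m}\le \alpha c''e^{-c_2\alpha n}\sum_{m=0}^{n-1}e^{-c_2\alpha m}\le c''c_3\,e^{-c_2\alpha n},
\end{align*}
where the last step uses part (i) so that the factor $\alpha$ exactly cancels the $1/\alpha$ from the geometric sum, preserving the rate $e^{-c_2\alpha n}$. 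Feeding these three bounds back into the chain-rule expansion of the first paragraph completes the proof. I expect the main obstacle to be the uniform-in-$\theta$ contraction estimate for $\Psi_{n,m}$ --- in particular marrying the Lyapunov-norm local contraction to the global ``reach $\calN$ in $\bO{1/\alpha}$ steps'' statement --- and, secondarily, the bookkeeping that keeps the factor $\alpha$ in every forcing term so that the Duhamel sums telescope to the correct exponential rate rather than degrading it.
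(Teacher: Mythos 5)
Your proof is correct, and it rests on the same two structural pillars as the paper's --- the Hurwitz linearization at $\theta^*$ from Assumption~\ref{A:attractor}, and compactness of $\Theta$ to absorb a transient of length $O(1/\alpha)$ --- but it organizes the linear-stability part of the argument genuinely differently. The paper freezes the coefficient at $H=D\barf(\theta^*)$, invokes discretization-stability results (\cite{bof2018lyapunov}) to get $\norm{\varphi_n(\theta)-\theta^*}\le a e^{-c_2\alpha n}$ uniformly on $\Theta$, and then treats $D\barf(\varphi_k(\theta))-H$ as an exponentially decaying perturbation: the derivative recursion is solved against the constant-coefficient powers $(I+\alpha H)^{n-k}$, and even the case $i=1$ requires a discrete-Gronwall/log-product step (their bound $x_{n+1}\le \alpha(a')^2 e^{-c_2\alpha n}\sum_{k\le n} x_k$, controlled via concavity of the logarithm and part (i)). You instead prove a uniform exponential bound on the ordered products $\Psi_{n,m}$ of the \emph{actual} varying Jacobians, via a quadratic Lyapunov norm $\norm{\cdot}_P$, a forward-invariant $P$-ball $\calN$, and a uniform $O(\alpha)$ Euler-error bound over the finite horizon $[0,T]$. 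This makes $i=1$ immediate ($J_n=\Psi_{n,0}$) and leaves only clean variation-of-constants sums $\sum_{m}\Psi_{n,m+1}G^{(i)}_m$ for $i=2,3$, with the explicit factor $\alpha$ in the forcing cancelling the $1/\alpha$ from part (i) exactly as in the paper's handling of its remainder sums. What your route buys: the frozen-coefficient perturbation bookkeeping disappears, and the bound on $\Psi_{n,m}$ for all $m\le n$ (not just $m=0$) is precisely the transition-matrix control needed in the Duhamel step, which the paper only obtains implicitly by re-running its perturbation argument at each order $i$. What it costs: you must establish forward-invariance of $\calN$ and the uniform Euler error yourself, which the paper outsources to a citation; both of your sketches for these are standard and correct. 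Note finally that your chain-rule reduction uses $\barf\in\mathcal{C}^3$ for $i=3$, which is the same regularity the paper's own proof silently uses, so you are consistent with the paper on that point.
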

\begin{proof}
    The proof of \emph{(ii)} is just a comparison between a sum and an integral. Indeed, as $x\mapsto e^{-x}$ is decreasing, one have:
    \begin{align*}
        \frac1{c_2} = \int_0^\infty e^{-c_2 x}dx =\sum_{k=1}^\infty \int_{(k-1)\alpha}^{k\alpha} e^{- c_2 x}dx \ge \sum_{k=1}^\infty \alpha e^{-c_2 \alpha k}.
    \end{align*}
    This shows that:
    \begin{align*}
        \sum_{k=0}^\infty e^{-c_2 \alpha k}  = 1 + \sum_{k=1}^\infty e^{-c_2 \alpha k}
        &= 1 + \frac1{c_2\alpha} = (\alpha+\frac1{c_2})\alpha.
    \end{align*}
    This concludes the proof by setting $c_3=(\alpha_0 + 1/c_2)$. 
    
    To prove \emph{(ii)}, let us first prove the result for the first derivative ($i=1$). By Assumption~\ref{A:attractor}, all trajectories of the ODE $\dot{\vartheta}=\bar{\vartheta}$ converge to the fixed point $\theta^*$ and the Jacobian $H := D\bar{f}(\theta^*)$ is Hurwitz. By classical results \cite{bof2018lyapunov} on the discretization of ODE\footnote{To see why this is true, one should remark that, because $H$ is Hurwitz, there exists $c>0$ and $\alpha*0$ such that for all $\alpha\le\alpha_0$, the eigenvalues of $I+\alpha H$ have a modulus smaller than $1-\alpha c$. This implies that the recurrence equation $\theta_{n+1}=\theta_n+\alpha \bar{f}(\theta_n)$ is exponentially stable around $\theta^*$. The uniform bound for all $\theta\in\Theta$ comes from the fact that $\Theta$ is compact.}, this implies that, for $\alpha$ small enough, the discretization $\varphi_n(\theta)$ converges exponentially fast to $\theta^*$, that is: there exists $\alpha_0>0$ and $a,b$ such that for all $\alpha<\alpha_0$ and $\theta\in\Theta$: $\norm{\varphi_n(\theta)-\theta^*}\le a e^{-c_2\alpha n}$. Using this and the fact that $\bar{f}$ is twice continuously differentiable in $\theta^*$ implies the existence of $a'$ such that:
    \begin{align}
        \norm{D\bar{f}(\varphi_n(\theta))-H}\le a' e^{-c_2\alpha n}\text{ and }(I+\alpha H)^{k}\le a'e^{-c_2\alpha n}.
    \end{align}
    
    Recall that the definition of $\varphi_{n}$ in Equation~\eqref{eq:discrete_rec} implies that $\varphi_{n+1}(\theta) - \varphi_n(\theta) = \alpha\bar{f}(\varphi_n(\theta))$ with $\varphi_0(\theta)=\theta$.  Hence, as $\bar{f}$ is differentiable with respect to its initial condition, by using the chain rule, the derivative of $\varphi_{n+1}$ exists and satisfies:
    \begin{align}
        \label{eq:rec_Dphi}
        D \varphi_{n+1}(\theta) - D \varphi_n(\theta) = \alpha D \varphi_n (\theta) D \bar{f} (\varphi_n(\theta)).
    \end{align}
    To simplify notation, let use denote by $X_{n+1}:=D \varphi_{n+1}(\theta)$ (the above equation is $X_{n+1}-X_n = \alpha X_n D \bar{f} (\varphi_n(\theta))$), and let us compute\footnote{We compute this quantity to solve \eqref{eq:rec_Dphi} by using a technique similar to the change of variable method.} $X_{n+1}(I+\alpha H)^{-(n+1)}$. By adding a substracting the term $X_n(I+\alpha H)^{-n}$, it holds that:
    \begin{align*}
        X_{n+1}(I+\alpha H)^{-(n+1)} &= X_{n+1}(I+\alpha H)^{-(n+1)} - X_{n}(I+\alpha H)^{-n} + X_{n}(I+\alpha H)^{-n}\\
        &= (X_{n+1} - X_{n} (I+\alpha H) )(I+\alpha H)^{-(n+1)} + X_{n}(I+\alpha H)^{-n}\\
        &= \alpha X_{n} (D\bar{f}(\varphi_n(\theta)) - H)(I+\alpha H)^{-(n+1)} + X_{n}(I+\alpha H)^{-n},
    \end{align*}
    where we used \eqref{eq:rec_Dphi} for the last line. 

    By using that the last term $X_{n}(I+\alpha H)^{-n}$ corresponds to the $X_{n+1}(I+\alpha H)^{-(n+1)}$ where we replace $n+1$ by $n$, a direct induction shows that
    \begin{align*}
        X_{n+1}(I+\alpha H)^{-(n+1)} = \alpha \sum_{k=1}^n X_k (D\bar{f}(\varphi_k(\theta)) - H)(I+\alpha H)^{-(k+1)}.
    \end{align*}
    Multiplying this equation by $(I+\alpha H)^{n+1}$ implies that:
    \begin{align}
        \label{eq:X_{n+1}}
        X_{n+1} = \alpha \sum_{k=1}^n  X_k (D\bar{f}(\varphi_k(\theta)) - H)(I+\alpha H)^{n-k}.
    \end{align}
    By using \eqref{eq:rec_Dphi}, and denoting $x_{n+1}:=\norm{X_{n+1}}$, it holds that
    \begin{align}
        \label{eq:x_{n+1}}
        x_{n+1} \le \alpha \sum_{k=1}^n x_k a' e^{-c_2\alpha k} a' e^{-c_2\alpha(n-k)}
        &= \alpha (a')^2 e^{-c_2\alpha n} \sum_{k=1}^n x_k.
    \end{align}
    To conclude, let $y_{n+1}:=\alpha (a')^2 \sum_{k=1}^n y_ke^{-c_2\alpha k}$ with $y_0:=x_0$. It can be shown by induction on $k$ that$x_{n}\le e^{c_2\alpha k} y_k$. Moreover, we have:
    \begin{align*}
        y_{n+1} &= \alpha (a')^2 y_n e^{-c_2\alpha n} + y_n = y_n(1+\alpha (a')^2 e^{-c_2\alpha n}) = \prod_{k=1}^n (1+\alpha (a')^2 e^{-c_2\alpha k}).
    \end{align*}
    By using the concavity of the log, we have that:
    \begin{align*}
        \log \prod_{k=1}^n (1+\alpha (a')^2 e^{-c_2\alpha k}) &= \sum_{k=1}^n \log (1+\alpha (a')^2 e^{-c_2\alpha k})\\
        &\le \sum_{k=1}^n \alpha (a')^2 e^{-c_2\alpha k}.
    \end{align*}
    By using the point (i), the last quantity is bounded by some quantity $d$. This shows that $x_n\le de^{-c_2 \alpha n}$.
    
    The proof of the higher derivatives (case $i\ge2$) are very similar to the proof for the first derivative $i=1$ and we only give an overview. For the second derivative, differentiating \eqref{eq:rec_Dphi} with respect to $\theta$ gives:
    \begin{align*}
        D^2 \varphi_{n+1}(\theta) - D^2 \varphi_{n}(\theta) = \alpha D^2\varphi_n(\theta) D \bar{f} (\varphi_n(\theta)) + \alpha (D \varphi_n(\theta),D\varphi_n(\theta))\cdot D^2 \bar{f} (\varphi_n(\theta)).
    \end{align*}
    This equation is very close to \eqref{eq:rec_Dphi} with an additional term $Z_n :=\alpha (D \varphi_n(\theta),D\varphi_n(\theta))\cdot D^2 \bar{f} (\varphi_n(\theta))$. Indeed, denoting by $X_{n}'=D^2 \varphi_{n}(\theta)$, one can reapply the steps used to obtain \eqref{eq:X_{n+1}} and get instead:
    \begin{align*}
        X'_{n+1} = \alpha \sum_{k=1}^n  (X'_k (D\bar{f}(\varphi_k(\theta)) - H)+Z_k)(I+\alpha H)^{n-k}.
    \end{align*}
    One can then use our result for the first derivative to show that $\norm{(X'_k (D\bar{f}(\varphi_k(\theta)) - H)+Z_k)}\le a'' e^{-c_2\alpha k}$ and obtain an equation similar to \eqref{eq:x_{n+1}} but for $x'_{n+1}:=\norm{D^2\varphi(\theta)}$. The proof can be modified mutatis-mutandis for higher derivatives. 
\end{proof}

\subsection{Bound on the almost-telescopic sum} In this section, we prove a lemma that we will later use in the proof of Lemma~\ref{lem:Poisson}. This lemma concerns a summation that is \emph{almost-telescopic}: the first \eqref{eq:kind_of_telescopic} is of the form $\sum_{k=0}^{n-1} u_k(v_{k+1}-v_{k})$. If it would hold that $u_k=u_{k-1}$, then the sum would be telescopic and there would be no need for a lemma. Here, we show that the difference $\norm{u_k-u_{k-1}}$ is small and use this to prove our result. In the second lemma, we make a second round of summation.

\begin{lemma}\label{lem:kind_of_telescopic_sum_bound}
    Assume \ref{A:mart}--\ref{A:attractor}. Then, there exists a constant $C>0$ and $\alpha_0$ such that for all $n$, $\alpha\le\alpha_0$ and all $i\in\{1,2,3\}$, $h\in\cont{i}{\Theta}$ and $g\in\cont{0}{\Theta}$:
    \begin{align}
        \label{eq:kind_of_telescopic}
        \abs{\esp{\sum_{k=0}^{n-1}D^i(h \circ \varphi_{n-(k+1)})(\theta_{k})\left( g(\theta_{k+1}, X_{k+1}) - g(\theta_{k}, X_{k}) \right)}}\le C.
    \end{align}
\end{lemma}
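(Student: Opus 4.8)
The plan is to control the almost-telescopic sum $\sum_{k=0}^{n-1} u_k (v_{k+1}-v_k)$ where $u_k := D^i(h\circ\varphi_{n-(k+1)})(\theta_k)$ and $v_k := g(\theta_k, X_k)$, by performing a discrete integration by parts (Abel summation). Writing $S := \sum_{k=0}^{n-1} u_k (v_{k+1}-v_k)$ and rearranging, I would move the increment from the $v$'s onto the $u$'s, obtaining a boundary term plus a sum involving the differences $u_k - u_{k-1}$. Concretely, Abel summation gives
\begin{align*}
    \sum_{k=0}^{n-1} u_k(v_{k+1}-v_k) = u_{n-1} v_n - u_0 v_0 - \sum_{k=1}^{n-1} (u_k - u_{k-1}) v_k.
\end{align*}
Since $g$ is bounded (it lies in $\cont{0}{\Theta}$, so $\norm{g}\le 1$) and $\Theta,\calX$ are bounded, the $v_k$ are uniformly bounded, so the whole estimate reduces to bounding the boundary terms and, crucially, the sum $\sum_{k=1}^{n-1}\norm{u_k - u_{k-1}}$.

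The heart of the argument is therefore to show that $\sum_{k=1}^{n-1}\esp{\norm{u_k-u_{k-1}}}$ is bounded by a constant independent of $n$ and $\alpha$. The increment $u_k-u_{k-1} = D^i(h\circ\varphi_{n-(k+1)})(\theta_k) - D^i(h\circ\varphi_{n-k})(\theta_{k-1})$ has two distinct sources of variation: the shift in the index of $\varphi$ (from $n-k$ to $n-(k+1)$) and the shift in the evaluation point (from $\theta_{k-1}$ to $\theta_k$). I would split this difference accordingly, writing it as
\begin{align*}
    \bigl(D^i(h\circ\varphi_{n-(k+1)})(\theta_k) - D^i(h\circ\varphi_{n-k})(\theta_k)\bigr) + \bigl(D^i(h\circ\varphi_{n-k})(\theta_k) - D^i(h\circ\varphi_{n-k})(\theta_{k-1})\bigr).
\end{align*}
For the second piece, the mean value inequality bounds it by $\normopexact[i+1]{h\circ\varphi_{n-k}}\cdot\norm{\theta_k-\theta_{k-1}}$; since $\norm{\theta_k-\theta_{k-1}} = \bO{\alpha}$ from the recurrence \eqref{eq:model} (using that $f$, $M$ and $\Theta$ are bounded), and Lemma~\ref{lemma:exponentially_small_derivatives} gives $\normopexact[i+1]{h\circ\varphi_{n-k}} \le c_1 e^{-c_2\alpha(n-k)}$, this contributes $\bO{\alpha}\,c_1 e^{-c_2\alpha(n-k)}$ per term. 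For the first piece, I would use that $\varphi_{n-k} = \varphi_{n-(k+1)+1}$ is one deterministic Euler step past $\varphi_{n-(k+1)}$, so the two maps differ by $\alpha\bar f\circ\varphi_{n-(k+1)}$ inside the composition; a Taylor/chain-rule expansion then expresses the difference of the $i$th derivatives as $\alpha$ times a combination of derivatives of $h\circ\varphi_{n-(k+1)}$ of order up to $i+1$, each again exponentially small by Lemma~\ref{lemma:exponentially_small_derivatives}, yielding another $\bO{\alpha}\,c_1 e^{-c_2\alpha(n-k)}$ bound.

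Combining the two pieces, each increment satisfies $\esp{\norm{u_k-u_{k-1}}}\le C'\alpha\, e^{-c_2\alpha(n-k)}$, and summing over $k$ using part (i) of Lemma~\ref{lemma:exponentially_small_derivatives} gives $\sum_k \esp{\norm{u_k-u_{k-1}}}\le C'\alpha\sum_{k}e^{-c_2\alpha(n-k)} \le C'\alpha\cdot(c_3/\alpha) = C'c_3$, a constant independent of $n$ and $\alpha$ — exactly the cancellation of the $\alpha$ with the $1/\alpha$ from the geometric sum that makes the whole method work. The boundary terms $u_{n-1}v_n$ and $u_0 v_0$ are each $\bO{1}$ (in fact $u_{n-1}$ involves $\varphi_0 = \mathrm{id}$ and $u_0$ involves $\varphi_{n-1}$ which is exponentially small, so both are harmlessly bounded), completing the estimate. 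The main obstacle I anticipate is handling the index-shift piece of the increment cleanly: one must carefully expand $h\circ\varphi_{n-k} = h\circ\varphi_{n-(k+1)}(\,\cdot + \alpha\bar f)$ and differentiate $i$ times, keeping track of all Faà di Bruno terms, to confirm that every resulting term carries a factor $\alpha$ and is controlled by an exponentially decaying derivative norm; once that bookkeeping is done, the geometric-sum cancellation delivers the uniform constant.
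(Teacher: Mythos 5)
Your proposal is correct and follows essentially the same route as the paper's own proof: the same Abel summation reducing everything to $\sum_k\norm{u_k-u_{k-1}}$, the same two-way split of each increment into an evaluation-point shift (bounded via the Lipschitz constant $c_1e^{-c_2\alpha(n-k)}$ from Lemma~\ref{lemma:exponentially_small_derivatives} times $\norm{\theta_k-\theta_{k-1}}=\bO{\alpha}$) and an index shift (bounded via $\varphi_{n-k}-\varphi_{n-(k+1)}=\alpha\barf\circ\varphi_{n-(k+1)}$), and the same cancellation of $\alpha$ against the $c_3/\alpha$ geometric sum. The only cosmetic difference is that you carry out the Fa\`a di Bruno bookkeeping for general $i\in\{1,2,3\}$ while the paper writes the case $i=1$ and leaves the extension implicit.
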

\begin{proof}
    Let us denote $u_k:=D(h \circ \varphi_{n-(k+1)})(\theta_{k})$ and $v_k:=g(\theta_k,X_k)$. The quantity \eqref{eq:kind_of_telescopic} is equal to $\esp{\sum_{k=0}^{n-1} u_k(v_{k+1}-v_{k})}$. By shifting the indices of the sum (which corresponds to a discrete-time integration by part), we get: 
    \begin{align}
        \sum_{k=0}^{n-1} u_k(v_{k+1}-v_{k})&= \sum_{k=0}^{n-1} u_k v_{k+1}-\sum_{k=0}^{n-1}u_kv_{k}\nonumber\\
        &= \sum_{k=1}^{n} u_{k-1} v_{k}-\sum_{k=0}^{n-1}u_k v_{k} \\
        &= u_{n-1}v_n - u_0v_0 + \sum_{k=1}^{n-1} (u_{k-1}-u_k)v_k.\label{eq:telescopic3}
    \end{align}
    As the norm of $g$ and $h$ are bounded by $1$, $\norm{u_{n-1}v_n - u_0v_0}$ and $\norm{v_k}$ are bounded independently of $\alpha$.  Hence, the result follows if we can show that $\sum_{k=1}^{n-1} \norm{u_{k-1}-u_k}$ is bounded regardless of $\alpha$. To show this, by adding and subtracting the term $D(h \circ \varphi_{n-k})(\theta_{k})$, we have: 
    \begin{align*}
        u_{k-1}-u_k &= D(h \circ \varphi_{n-k})(\theta_{k-1}) - D(h \circ \varphi_{n-k})(\theta_{k}) \\
        &\qquad + D(h \circ \varphi_{n-k})(\theta_{k}) - D(h \circ \varphi_{n-(k+1)})(\theta_{k})
    \end{align*}
    By Lemma~\ref{lemma:exponentially_small_derivatives}, the function $D(h \circ \varphi_{n-k})$ is Lipschitz continuous with a constant $c e^{-(n-k)\alpha c_2}$. As the difference between $\theta_k$ and $\theta_{k-1}$ is of order $\alpha$, this shows that the first line is bounded by $c' \alpha e^{-(n-k)\alpha c_2}$. For the second line, the definition of the function $\varphi_{n-k}$ in \eqref{eq:discrete_rec} implies that for any $\theta$:
    \begin{align*}
        \varphi_{n-k}(\theta) - \varphi_{n-(k+1)} (\theta) &= \alpha \barf(\varphi_{n-(k+1)}(\theta)).
    \end{align*}
    This shows that difference between $D(h \circ \varphi_{n-k})$ and $D(h \circ \varphi_{n-(k+1)})$ is of order $c''\alpha e^{-(n-k)\alpha c_2}$. The two facts combined imply that:
    \begin{align*}
        \sum_{k=0}^n \norm{u_{k-1}-u_k}\le (c'+c'')  \sum_{k=0}^n \alpha e^{-c_2(n-k)\alpha} \le  (c'+c'')\sum_{k=0}^\infty \alpha e^{-c_2k\alpha}\le (c'+c'')c_3 \alpha.
    \end{align*}
\end{proof}

\subsection{Poisson equation and treatment of the averaging term}
\label{ssec:proof_lemma_poisson}

For a given $g:\Theta\times\calX\to\R$, we say that the function $G_g:\Theta\times\calX\to\R$ is a solution of the Poisson equation if it satisfies
\begin{align}
    \label{eq:Poisson}
    \sum_{y\in\calX} K_{x,y}(\theta)(G_g(\theta, x)-G_g(\theta, y)) = g(\theta, x) - \bar{g}(\theta),
\end{align}
with $\bar{g}(\theta)=\sum_x \pi_x(\theta)g(\theta,x)$  as before. 

It is shown in \cite[Lemma~4]{allmeier2023bias} that under assumption \ref{A:unichain}, for each $g$, there exists such a function $G_g$. Moreover, under the regularity assumption~\ref{A:cont}, this same lemma shows that the function $G_g$ can be chosen to be four times differentiable and that is derivatives satisfy $\normop[i]{G_g}=\bO{\normop[i]{g}}$.

\begin{lemma}
    \label{lem:Poisson}
    Assume \ref{A:mart}--\ref{A:attractor}. Then, there exists a constant $C>0$ and $\alpha_0$ such that for all $n$, $\alpha\le\alpha_0$ and all $i\in\{1,2,3\}$, $h\in\cont{i}{\Theta}$ and $g\in\cont{0}{\Theta}$:
    \begin{align}
        & \abs{ \sum_{k=0}^{n-1}\esp{D^i(h \circ \varphi_{n-(k+1)})(\theta_{k}) \Bigl( g(\theta_{k},X_{k})- \bar{g}(\theta_{k}) \Bigr)} } \leq C. \label{eq:decoupled_accuracy_lemma}
    \end{align}
\end{lemma}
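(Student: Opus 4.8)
The goal is to bound a sum of the form $\sum_{k=0}^{n-1}\esp{D^i(h\circ\varphi_{n-(k+1)})(\theta_k)(g(\theta_k,X_k)-\bar g(\theta_k))}$ by a constant independent of $\alpha$ and $n$. The natural idea is to use the Poisson equation \eqref{eq:Poisson} to rewrite the ``centered'' quantity $g(\theta_k,X_k)-\bar g(\theta_k)$ as a one-step difference of the Poisson solution $G_g$ along the Markov chain. Concretely, by the defining relation \eqref{eq:Poisson}, $g(\theta_k,X_k)-\bar g(\theta_k)=\sum_{y}K_{X_k,y}(\theta_k)(G_g(\theta_k,X_k)-G_g(\theta_k,y))$. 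Recognizing that $\sum_y K_{X_k,y}(\theta_k)G_g(\theta_k,y)=\esp{G_g(\theta_k,X_{k+1})\mid\calF_k}$ (up to the fact that $\theta$ is frozen at $\theta_k$ inside the transition), this lets us express the centered term as $G_g(\theta_k,X_k)-\esp{G_g(\theta_k,X_{k+1})\mid\calF_k}$, turning the sum into something telescopic in the variable $v_k:=G_g(\theta_k,X_k)$.

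First I would make this substitution precise inside the expectation. The subtlety is that $G_g$ is evaluated at $\theta_k$ in both terms, whereas a genuine telescope would want $G_g(\theta_{k+1},X_{k+1})$ in the second slot. I would therefore add and subtract $\esp{D^i(h\circ\varphi_{n-(k+1)})(\theta_k)G_g(\theta_{k+1},X_{k+1})\mid\calF_k}$, writing the summand as an exactly-telescopic part plus a correction coming from replacing $\theta_k$ by $\theta_{k+1}$ in $G_g$. The correction is of order $\alpha$ because $\theta_{k+1}-\theta_k=O(\alpha)$ and $G_g$ is (by the cited $\cite[Lemma~4]{allmeier2023bias}$) Lipschitz with derivative norms $\normop[j]{G_g}=\bO{\normop[j]{g}}$, so it contributes a factor $\alpha$ that will cancel against the $1/\alpha$ coming from summing the exponential weights.

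The telescopic part is exactly the ``almost-telescopic'' sum handled by Lemma~\ref{lem:kind_of_telescopic_sum_bound}: set $u_k:=D^i(h\circ\varphi_{n-(k+1)})(\theta_k)$ and $v_k:=G_g(\theta_k,X_k)$, so the leading sum is $\esp{\sum_{k}u_k(v_{k+1}-v_k)}$ up to the martingale bookkeeping. Lemma~\ref{lem:kind_of_telescopic_sum_bound} gives this a bound of $C$, using that $\sum_k\norm{u_{k-1}-u_k}=\bO{\alpha}$ via the exponential decay of the derivatives of $h\circ\varphi_{n-k}$ from Lemma~\ref{lemma:exponentially_small_derivatives}. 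For the remaining correction terms I would bound them directly: each carries a factor $\alpha$ from the $\theta$-increment and a factor $\norm{D^i(h\circ\varphi_{n-(k+1)})}\le c_1e^{-c_2\alpha(n-k-1)}$, and summing $\alpha\sum_k e^{-c_2\alpha(n-k)}\le\alpha\cdot c_3/\alpha=c_3$ by part (i) of Lemma~\ref{lemma:exponentially_small_derivatives} yields a constant independent of $\alpha$ and $n$.

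The step I expect to be the main obstacle is the careful separation of the martingale/conditioning structure: the Poisson identity naturally produces $\esp{G_g(\theta_k,X_{k+1})\mid\calF_k}$, and turning this into a clean telescope in $v_k=G_g(\theta_k,X_k)$ requires both the tower property (to drop conditional expectations under the outer $\esp{\cdot}$) and a disciplined accounting of the $\theta_k\to\theta_{k+1}$ mismatch so that the correction is genuinely $\bO{\alpha}$ per term. Once that reduction is in place, everything else reduces to invoking Lemma~\ref{lem:kind_of_telescopic_sum_bound} together with the summable exponential weights from Lemma~\ref{lemma:exponentially_small_derivatives}.
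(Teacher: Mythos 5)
Your proposal is correct and follows essentially the same route as the paper: rewrite $g(\theta_k,X_k)-\bar g(\theta_k)$ via the Poisson equation \eqref{eq:Poisson} so that its conditional expectation given $\calF_k$ cancels against the one-step difference $G_g(\theta_k,X_k)-G_g(\theta_k,X_{k+1})$, then split off the $\theta_k\to\theta_{k+1}$ mismatch $G_g(\theta_k,X_{k+1})-G_g(\theta_{k+1},X_{k+1})$ (which is $\bO{\alpha}$ per term by the Lipschitz bounds on $G_g$ and sums to a constant against the exponential weights of Lemma~\ref{lemma:exponentially_small_derivatives}), and handle the remaining telescope $G_g(\theta_{k+1},X_{k+1})-G_g(\theta_k,X_k)$ with Lemma~\ref{lem:kind_of_telescopic_sum_bound}. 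This is exactly the paper's three-line decomposition \eqref{eq:extended_difference_lemma_decoupling_1}--\eqref{eq:extended_difference_lemma_decoupling_3}, invoking the same auxiliary lemmas in the same roles.
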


\begin{proof}
    To lighten the notations, we consider the case $i=1$, the proof holds for $i>1$ by replacing $D(h\circ\varphi)$ by $D^i(h\circ \varphi)$. By adding and subtracting some terms, the quantity \eqref{eq:decoupled_accuracy_lemma} is equal to the expectation of
    \begin{align}
        \sum_{k=0}^{n-1}D(h \circ \varphi_{n-k-1})(\theta_{k}) \Bigl( g(\theta_{k},X_{k})- \bar{g}(\theta_{k}) 
        &+ G_{g}(\theta_{k}, X_{k}) - G_{g}(\theta_{k}, X_{k+1}) \label{eq:extended_difference_lemma_decoupling_1}\\
        & + G_{g}(\theta_{k}, X_{k+1}) - G_{g}(\theta_{k+1}, X_{k+1}) \label{eq:extended_difference_lemma_decoupling_2}\\
        & + G_{g}(\theta_{k+1}, X_{k+1}) - G_{g}(\theta_{k}, X_{k}) \Bigr). \label{eq:extended_difference_lemma_decoupling_3}
    \end{align}
    We examine the three lines separately: 
    \begin{itemize}
        \item By using the fact that $X_{k+1}$ makes a Markovian transition \eqref{eq:Markov}, and by using the definition of $G_h(x,y)$, we get: 
        \begin{align*}
            \esp{G_{g}(\theta_k, X_k) - G_{g}(\theta_k, X_{k+1})\mid \calF_k}
            & = -\sum_{y'} K(\theta_{k})_{X_k,y'} G_{g}(\theta_k,y') \\
            & = - (g(\theta_k,X_k)-\bar{g}(\theta_{k}))
        \end{align*}
        from which follows that the expectation of \eqref{eq:extended_difference_lemma_decoupling_1} is equal to zero.
        \item     To bound Equation \eqref{eq:extended_difference_lemma_decoupling_2}, we use the definition~\eqref{eq:model} of $\theta_{k+1}$ to implies that there exists a constant $C>0$ such that $\abs{\theta_{k+1} - \theta_k} \le C\alpha $. By using that $G_g$ is Lipschitz-continuous (see \cite[Lemma~4]{allmeier2023bias}) and that the norm of the derivative of $h\circ\varphi_{n-k}$ is bounded by $c_1e^{-c_2(n-k)\alpha}$ (Lemma~\ref{lemma:exponentially_small_derivatives}), this shows that there exists $c'<\infty$ such that:
        \begin{align*}
            & \esp{\sum_{k=0}^{n-1}\norm{D(h \circ \varphi_{n-k-1})(\theta_{k}) \left( G_{g}(\theta_{k}, X_{k+1}) - G_{g}(\theta_{k+1}, X_{k+1}) \right) }} \le c' \alpha \sum_{k=0}^\infty e^{-c_2k\alpha} \le c_3c'. \\
        \end{align*}
        \item     The fact that the expectation of \eqref{eq:extended_difference_lemma_decoupling_3} is bounded is a direct consequence of Lemma~\ref{lem:kind_of_telescopic_sum_bound} applied with $g=G_g$ and the fact that $\normop{G_g}=\bO{\normop{g}}$.
    \end{itemize}
    This conclude the proof.
\end{proof}

\subsection{Refined treatment of the averaging term}

The next two lemmas prove properties of the average sums that refine the analysis of Lemma~\ref{lem:Poisson}. They are used to study the term $\overline{A}_{n}$ in the proof of Proposition~\ref{prop:V}.

\begin{lemma}
    \label{lem:Poisson2}
    Assume \ref{A:mart}--\ref{A:attractor}. Then, there exists a constant $C>0$ and $\alpha_0$ such that for all $n$, $\alpha\le\alpha_0$:
    \begin{align}
        \forall\ell\in\cont{1}{\Theta}: \qquad
        &\frac1n \sum_{k=0}^{n-1}\esp{\ell(\theta_{k},X_{k})- \bar{\ell}(\theta_{k})} \leq C(\frac{1}{n}+\alpha).
        \label{eq:decoupled_accuracy_lemma2}\\
        \forall\ell\in\cont{2}{\Theta}: \qquad
        &\frac1n \sum_{k=0}^{n-1}\esp{\ell(\theta_{k},X_{k})- \bar{\ell}(\theta_{k}) + \alpha \bar{m}(\theta_k)} \leq C(\frac{1}{n}+\alpha^2)\label{eq:decoupled_accuracy_lemma3},
    \end{align}
    where $\bar{m}=\sum_{x,x'\in\calX} DG_\ell(\theta, x') (f(\theta,x) + R(\theta,x,x'))K_{x,x'}(\theta)\pi_x(\theta)$.
\end{lemma}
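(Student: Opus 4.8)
The plan is to convert the stationary fluctuation $\ell(\theta_k,X_k)-\bar\ell(\theta_k)$ into a one-step increment of the Poisson solution $G_\ell$ and then sum. Since $G_\ell$ solves \eqref{eq:Poisson}, and since $X_{k+1}$ transitions according to $K(\theta_k)$, one has $\esp{G_\ell(\theta_k,X_{k+1})\mid\calF_k}=\sum_{y}K_{X_k,y}(\theta_k)G_\ell(\theta_k,y)$, so that
\[
    \esp{\ell(\theta_k,X_k)-\bar\ell(\theta_k)}=\esp{G_\ell(\theta_k,X_k)-G_\ell(\theta_k,X_{k+1})}.
\]
I would then write $G_\ell(\theta_k,X_k)-G_\ell(\theta_k,X_{k+1})=\bigl[G_\ell(\theta_k,X_k)-G_\ell(\theta_{k+1},X_{k+1})\bigr]+\bigl[G_\ell(\theta_{k+1},X_{k+1})-G_\ell(\theta_k,X_{k+1})\bigr]$. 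The first bracket telescopes, so its average $\frac1n\sum_{k=0}^{n-1}$ equals $\frac1n\esp{G_\ell(\theta_0,X_0)-G_\ell(\theta_n,X_n)}=\bO{1/n}$ (using that $G_\ell$ is bounded, \cite[Lemma~4]{allmeier2023bias}). For \eqref{eq:decoupled_accuracy_lemma2} it then remains to bound the second bracket: since $\ell\in\cont{1}{\Theta}$ makes $G_\ell$ Lipschitz and $\abs{\theta_{k+1}-\theta_k}=\bO{\alpha}$, each term is $\bO{\alpha}$, giving $\bO{\alpha}$ on average.

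For the refined bound \eqref{eq:decoupled_accuracy_lemma3} I would Taylor-expand the second bracket to first order. With $\theta_{k+1}-\theta_k=\alpha(f(\theta_k,X_k)+M_{k+1})$ and $\ell\in\cont{2}{\Theta}$ (hence $G_\ell\in\cont{2}{\Theta}$), Taylor's theorem gives
\[
    G_\ell(\theta_{k+1},X_{k+1})-G_\ell(\theta_k,X_{k+1})=\alpha\,DG_\ell(\theta_k,X_{k+1})\bigl(f(\theta_k,X_k)+M_{k+1}\bigr)+\bO{\alpha^2},
\]
where the remainder is $\bO{\alpha^2}$ in expectation because $\norm{D^2G_\ell}$ is bounded and $\esp{\norm{M_{k+1}}^2\mid\calF_k}$ is bounded on the compact $\Theta$. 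Conditioning the first-order term on $\calF_k$ turns it into a function of $(\theta_k,X_k)$ alone: the drift part becomes $\bigl(\sum_{x'}K_{X_k,x'}(\theta_k)DG_\ell(\theta_k,x')\bigr)f(\theta_k,X_k)$, whose stationary average is exactly the $f$-summand of $\bar m$, and the martingale part becomes $\sum_{x'}DG_\ell(\theta_k,x')\,\esp{M_{k+1}\mathbf{1}\{X_{k+1}=x'\}\mid\calF_k}$.

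Both pieces are $\cont{1}{\Theta}$ functions of $(\theta_k,X_k)$ (after rescaling by their norms, as allowed in the Notations), so I would apply the already-established inequality \eqref{eq:decoupled_accuracy_lemma2} to each. This replaces each $\theta_k$-dependent function by its stationary average at a cost of $\bO{1/n+\alpha}$; the $\alpha$ prefactor then demotes the $\bO{\alpha}$ error to $\bO{\alpha^2}$, leaving $\bO{1/n+\alpha^2}$. The resulting stationary averages are precisely the two summands of $\bar m$, which identifies the $\bO{\alpha}$ correction and proves \eqref{eq:decoupled_accuracy_lemma3}.

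The step I expect to be hardest is the martingale term $\esp{DG_\ell(\theta_k,X_{k+1})M_{k+1}\mid\calF_k}$: although $M$ is a martingale difference, $X_{k+1}$ and $M_{k+1}$ are correlated, so this term does not vanish and must be evaluated by conditioning on $X_{k+1}$ and invoking \ref{A:mart} to reach the joint law of $(M_{k+1},X_{k+1})$; this is what produces the $R$-dependent part of $\bar m$. A secondary, more bookkeeping-heavy difficulty is the self-referential use of \eqref{eq:decoupled_accuracy_lemma2} inside the proof of \eqref{eq:decoupled_accuracy_lemma3}: one must check that the functions obtained after conditioning indeed lie in $\cont{1}{\Theta}$ with norms bounded independently of $\alpha$ and $n$, which follows from $\normop[i]{G_\ell}=\bO{\normop[i]{\ell}}$ together with \ref{A:cont}, so that the constant $C$ does not degrade.
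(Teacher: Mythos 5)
Your proposal is correct and takes essentially the same route as the paper's proof: the Poisson-equation identity with a telescoping bracket ($\bO{1/n}$) plus a Lipschitz bracket ($\bO{\alpha}$) for \eqref{eq:decoupled_accuracy_lemma2}, then a first-order Taylor expansion of $G_\ell$ with $\bO{\alpha^2}$ remainder, nested conditioning via $\calF_k\land X_{k+1}$ and \ref{A:mart} to produce the $f$- and $R$-parts of $m$, and a bootstrap application of \eqref{eq:decoupled_accuracy_lemma2} to that function, with the $\alpha$ prefactor demoting the error to $\bO{\alpha^2+1/n}$. The only cosmetic difference is that you apply \eqref{eq:decoupled_accuracy_lemma2} separately to the drift and martingale pieces, whereas the paper applies it once to $m(\theta,x)=\sum_{x'}DG_\ell(\theta,x')(f(\theta,x)+R(\theta,x,x'))K_{x,x'}(\theta)$ as a whole.
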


\begin{proof}
    By applying the same trick as for \eqref{eq:extended_difference_lemma_decoupling_1}--\eqref{eq:extended_difference_lemma_decoupling_3} in the one of Lemma~\ref{lem:Poisson} but replacing $D^i(h \circ \varphi_{n-(k+1)})(\theta_{k})$ by $1/n$, the left-hand-side of \eqref{eq:decoupled_accuracy_lemma2} is equal to the expectation of
    \begin{align}
        \label{eq:proof_lemma_Poisson2}
        &\frac1n \sum_{k=0}^{n-1}\underbrace{G_\ell(\theta_{k}, X_{k+1}) - G_\ell(\theta_{k+1}, X_{k+1})}_{\text{$=\bO{\alpha\normop{G_\ell}}$ because $DG_\ell$ is bounded.}}
         + \frac1n \underbrace{ \sum_{k=0}^{n-1} G_\ell(\theta_{k+1}, X_{k+1}) - G_\ell(\theta_{k}, X_{k})}_{\text{$=O(1)$ because sum is telescopic.}}
    \end{align}
    where there is no equivalent of \eqref{eq:extended_difference_lemma_decoupling_1} because we already use that the expectation of this term is $0$. 
    
    In \eqref{eq:proof_lemma_Poisson2}, each element of the first sum is $\bO{\alpha}$ by using a Taylor expansion of $G_\ell$ and using that $\normop{DG_\ell}=\bO{\normop{\ell}}$. Hence the first term is of order $\bO{\alpha}$. The second sum is telescopic. As it is multiplied by $1/n$, this leads to the term $\bO{1/n}$ of \eqref{eq:decoupled_accuracy_lemma2}. Combining both leads to \eqref{eq:decoupled_accuracy_lemma2}. 
    
    \medskip
    
    To obtain \eqref{eq:decoupled_accuracy_lemma3}, we refine the analysis of the first term of \eqref{eq:proof_lemma_Poisson2}. By using a Taylor expansion on the function $\theta\mapsto G_\ell(\theta,X)$, the quantity $ G_\ell(\theta_{k}, X_{k+1}) - G_\ell(\theta_{k+1}, X_{k+1}) $ is equal to
    \begin{align}
       DG_\ell(\theta_k,X_{k+1})(\theta_k-\theta_{k+1}) + \normop[2]{G_\ell}\norm{\theta_{k+1}-\theta_k}^2.
       \label{eq:sum_of_two_terms_bound3}
    \end{align}
    By definition, $\norm{\theta_{k+1}-\theta_k}^2$ is of order $\alpha^2$. 
    
    Let $m(\theta,x)=\sum_{x'\in\calX} DG_\ell(\theta, x') (f(\theta,x) + R(\theta,x,x'))K_{x,x'}(\theta)$. By using the law of total expectation, 
    \begin{align*}
        &\esp{DG_\ell(\theta_k,X_{k+1})(\theta_k-\theta_{k+1})} = -\alpha \esp{DG_\ell(\theta_k,X_{k+1})(f(\theta_k,X_k)+M_{k+1})}\\
        &= -\alpha \esp{\esp{\esp{DG_\ell(\theta_k,X_{k+1})(f(\theta_k,X_k)+M_{k+1})\mid \calF_k\land X_{k+1}}\mid \calF_k}}\\
        &= -\alpha \esp{m(\theta_k,X_k)}.
    \end{align*}
    By using \eqref{eq:decoupled_accuracy_lemma2} with the function $m$ instead of $\ell$, we obtain that $\frac1n \sum_{k=0}^{n-1}\esp{m(\theta_k,X_k)-\bar{m}(\theta_k)} \le C(\alpha + 1/n)$. This shows that the first term of \eqref{eq:proof_lemma_Poisson2} is equal to $\alpha\sum_{k=0}^{n-1}\bar{m}(\theta_k)$ plus a term of order $C(\alpha^2+\alpha/n)$. As the term $\bO{1/n}$ from the telescopic sum of the second term of \eqref{eq:proof_lemma_Poisson2} dominates the term $\alpha/n$, this implies \eqref{eq:decoupled_accuracy_lemma3}.
\end{proof}

\begin{lemma}
    \label{lem:refined_barf}
    Assume \ref{A:mart}--\ref{A:attractor}. Then, there exists a constant $C>0$ and $\alpha_0$ such that for all $n$, $\alpha\le\alpha_0$, $h\in\cont{4}{\Theta}$ and $g\in\cont{3}{\Theta}$:
    \begin{align*}
        \abs{\frac1N\sum_{n=1}^N \sum_{k=0}^{n-1}D(h\circ\varphi_{n-(k+1)})(\theta_{k}) (g(\theta_k,X_k)-\bar{g}(\theta_k)) - \sum_{k=0}^{N-1} \bar{m}(\theta_k)} \le C\p{\alpha+\frac{1}{N\alpha^2}},
    \end{align*}
    where the function $\bar{m}$ is equal to 
    \begin{align*}
        \bar{m}(\theta) = \sum_{x,x'\in\calX} \sum_{n=0}^\infty D\bigl(D(h\circ\phi_n)(\theta) G_g(\theta, x')\bigr)\bigl(f(\theta,x)+R(\theta,x,x')\bigr)K_{x,x'}(\theta)\pi_x(\theta).
    \end{align*}
\end{lemma}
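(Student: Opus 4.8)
The goal is to prove the refined averaging identity of Lemma~\ref{lem:refined_barf}, which upgrades the $O(1)$ bound of Lemma~\ref{lem:Poisson} into a precise $\alpha$-expansion after a second (Cesàro) averaging over $n$. The natural starting point is the telescopic decomposition already used in the proof of Lemma~\ref{lem:Poisson}: for fixed $n$, insert the Poisson-equation solution $G_g$ and split the inner sum into the three contributions \eqref{eq:extended_difference_lemma_decoupling_1}--\eqref{eq:extended_difference_lemma_decoupling_3}. The first line vanishes in expectation by \eqref{eq:Poisson}, exactly as before. The telescopic line \eqref{eq:extended_difference_lemma_decoupling_3}, which contributed the $O(1)$ term in Lemma~\ref{lem:Poisson}, will now—after the additional averaging $\frac1N\sum_{n=1}^N$ and multiplication by $1/N$—be the source of the $\frac{1}{N\alpha^2}$ error term. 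The heart of the matter is therefore the middle line \eqref{eq:extended_difference_lemma_decoupling_2}, namely the term $D(h\circ\varphi_{n-(k+1)})(\theta_k)\bigl(G_g(\theta_k,X_{k+1})-G_g(\theta_{k+1},X_{k+1})\bigr)$, which must be expanded to first order in $\alpha$ rather than merely bounded.

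\textbf{Key steps in order.} First I would Taylor-expand $G_g(\theta_{k+1},X_{k+1})$ around $\theta_k$, using $\theta_{k+1}-\theta_k=\alpha(f(\theta_k,X_k)+M_{k+1})$, keeping the first-order term $DG_g(\theta_k,X_{k+1})(\theta_k-\theta_{k+1})$ and discarding the $O(\alpha^2)$ remainder (controlled since $\normop[2]{G_g}=\bO{\normop[2]{g}}$). This mirrors the refinement step \eqref{eq:sum_of_two_terms_bound3}--\eqref{eq:decoupled_accuracy_lemma3} in Lemma~\ref{lem:Poisson2}, but now with the prefactor $D(h\circ\varphi_{n-(k+1)})(\theta_k)$ instead of $1/n$. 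Next I would take the conditional expectation given $\calF_k\land X_{k+1}$ and then given $\calF_k$, using $\esp{M_{k+1}\mid\calF_k}=0$ and the Markov transition \eqref{eq:Markov}; this collapses the linear term into $-\alpha\, D(h\circ\varphi_{n-(k+1)})(\theta_k)\,\sum_{x'}DG_g(\theta_k,x')(f(\theta_k,X_k)+R(\theta_k,X_k,x'))K_{X_k,x'}(\theta_k)$. Defining $m(\theta,x)$ as this inner sum (without the $h\circ\varphi$ factor), I would then replace $D(h\circ\varphi_{n-(k+1)})(\theta_k)$ by $D(h\circ\phi_{\alpha(n-k-1)})(\theta_k)$ and ultimately by its limit $D(h\circ\phi_\cdot)(\theta^*)$, summing over the discrete flow index to produce the infinite sum $\sum_{n=0}^\infty D(D(h\circ\phi_n)(\theta^*)G_g(\theta^*,x'))(\cdots)$ appearing in $\bar m$; the passage from $\theta_k$ to $\theta^*$ and from $\varphi$ to $\phi$ is justified by Theorem~\ref{thrm:finite_n_bound} together with the exponential decay of Lemma~\ref{lemma:exponentially_small_derivatives}. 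Finally I would apply Lemma~\ref{lem:Poisson} (or \eqref{eq:decoupled_accuracy_lemma2}) once more, with the newly-constructed function $m$ in place of $g$, to replace $\esp{m(\theta_k,X_k)}$ by $\esp{\bar m(\theta_k)}$ at the cost of a further $O(1)$ term that, divided by $N$, is absorbed into the error.

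\textbf{The main obstacle.} I expect the delicate point to be the bookkeeping of the \emph{double} index structure: the outer average is over $n$ from $1$ to $N$, the inner sum is over $k$ from $0$ to $n-1$, and the factor $D(h\circ\varphi_{n-(k+1)})$ depends on the \emph{difference} $n-k$. After exchanging the order of summation one must recognize that, for each fixed lag $j=n-k-1$, the factor $D(h\circ\varphi_j)$ is essentially constant in $N$ while the Cesàro weight redistributes the mass; it is this rearrangement that turns the per-$n$ nonconvergent sum (noted in the proof of Proposition~\ref{prop:V} and illustrated in Figure~\ref{fig:example_periodic_y}) into a convergent object whose limit is $\sum_{k=0}^{N-1}\bar m(\theta_k)$. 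Controlling the accumulation of the $O(\alpha^2)$ Taylor remainders across the $\Theta(N/\alpha)$ effective pairs $(n,k)$—so that they sum to $O(\alpha)$ rather than diverging—requires the exponential-decay bound $\norm{D^i(h\circ\varphi_{n-k})}\le c_1 e^{-c_2\alpha(n-k)}$ of Lemma~\ref{lemma:exponentially_small_derivatives} together with point~(i) of that lemma, and getting the two error scales $\alpha$ and $\frac{1}{N\alpha^2}$ to come out exactly as stated is where the care lies. The requirement that $h\in\cont{4}{\Theta}$ and $g\in\cont{3}{\Theta}$ enters precisely here, since $\bar m$ involves a derivative $D\bigl(D(h\circ\phi_n)(\theta)G_g(\theta,x')\bigr)$ and must itself be regular enough for the final application of Lemma~\ref{lem:Poisson}.
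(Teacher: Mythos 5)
There is a genuine gap in your treatment of the almost-telescopic line \eqref{eq:extended_difference_lemma_decoupling_3}, and it propagates into a wrong main term. You claim this line is pure error and ``the source of the $\frac{1}{N\alpha^2}$ term''. But with the prefactor $u_{n,k}:=D(h\circ\varphi_{n-(k+1)})(\theta_k)$ depending on both $k$ and $n$, summation by parts (as in Lemma~\ref{lem:kind_of_telescopic_sum_bound}) leaves boundary terms plus $\sum_{k}(u_{n,k-1}-u_{n,k})G_g(\theta_k,X_k)$, which is $O(1)$ per $n$ and hence still $O(1)$ after the Cesàro average --- the same order as the main term, not the error. Worse, this residue is where half of $\bar m$ lives: the increments $u_{n,k-1}-u_{n,k}$ contain a first-order piece proportional to $D\bigl(D(h\circ\varphi_{n-k})\bigr)(\theta_{k})$ applied to the increment $\theta_k-\theta_{k-1}$, which after conditioning yields exactly the contributions where the derivative falls on the flow factor. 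You can see the inconsistency within your own proposal: your collapsed middle line gives an $m$ of the form $D(h\circ\varphi_{n-(k+1)})(\theta_k)\,DG_g(\theta_k,x')(f+R)$ --- derivative on $G_g$ only --- yet the lemma's $\bar m$ involves the derivative of the \emph{product}, $D\bigl(D(h\circ\phi_n)(\theta)\,G_g(\theta,x')\bigr)$. Nothing in your listed steps produces the $D\bigl(D(h\circ\phi_n)(\theta)\bigr)G_g$ cross term, so discarding line \eqref{eq:extended_difference_lemma_decoupling_3} as error gives the wrong constant; you then assert the correct target formula without a step that reaches it. (A minor further conflation: passing from $\bar m(\theta_k)$ to $\theta^*$ via Theorem~\ref{thrm:finite_n_bound} belongs to the proof of Proposition~\ref{prop:V}, not to this lemma, which keeps $\bar m$ evaluated along $\theta_k$.)

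The paper sidesteps all of this by performing the sum exchange \emph{first}: it rewrites the double sum as $\frac1N\sum_{k=0}^{N-1}\sum_{j=0}^{N-(k+1)}u_j(\theta_k)g(\theta_k,X_k)$ with $u_j(\theta):=D(h\circ\varphi_j)(\theta)$, completes the inner lag sum to infinity --- the truncation tail is bounded via Lemma~\ref{lemma:exponentially_small_derivatives} by $\frac1N\sum_{k}\frac{c}{\alpha}e^{-c_2\alpha(N-k)}\le \frac{c'}{N\alpha^2}$, which is the true source of the $\frac{1}{N\alpha^2}$ term --- and then applies Lemma~\ref{lem:Poisson2} to the single $n$-independent function $\ell(\theta,x):=\sum_{j=0}^\infty u_j(\theta)g(\theta,x)$. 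Since the Poisson equation acts only on the $x$ variable, $G_\ell(\theta,x)=\sum_j u_j(\theta)G_g(\theta,x)$, so the one Taylor step inside Lemma~\ref{lem:Poisson2} differentiates the product $u_j(\theta)G_g(\theta,x')$ automatically and delivers both halves of $\bar m$; moreover the telescopic sum there is exact (no $k$-varying weight), costing only $O(\norm{G_\ell}/N)=O(1/(N\alpha))$, which is dominated by the tail term. Your per-$n$ route could be repaired by expanding, rather than bounding, the increments $u_{n,k-1}-u_{n,k}$ to first order and tracking them through the averages, but that amounts to redoing the paper's exchange-and-aggregate argument in a more painful order.
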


\begin{proof}
    Let us denote by $u_{n-(k+1)}(\theta_k):=D(h\circ\varphi_{n-(k+1)})(\theta_{k})$ and let us study the sum $S := \frac1N\sum_{n=1}^N \sum_{k=0}^{n-1}D(h\circ\varphi_{n-(k+1)})(\theta_{k})$.  By reordering the sum, this is equal to:
    \begin{align}
        S = \frac1N \sum_{k=0}^{N-1} \sum_{n=k+1}^{N} &u_{n-(k+1)}(\theta_k)g(\theta_k,X_k) = \frac1N \sum_{k=0}^{N-1} \sum_{n=0}^{N-(k+1)} u_{n}(\theta_k)g(\theta_k,X_k).
        \nonumber\\
        &= \frac1N \sum_{k=0}^{N-1} \sum_{n=0}^{\infty} u_{n}(\theta_{k})g(\theta_k,X_k)
        - \frac1N \sum_{k=0}^{N-1} \sum_{n=N-k}^{\infty} u_{n}(\theta_{k})g(\theta_k,X_k)\label{eq:sum_of_two_terms}
    \end{align}
    By Lemma~\ref{lemma:exponentially_small_derivatives}, there exists $c_1,c_2>0$ such that $\norm{u_{n}}\le c_1e^{-c_2\alpha n}$. This shows that the second term of the previous sum \eqref{eq:sum_of_two_terms} is bounded (in absolute value) by 
    \begin{align}
        \frac1N\sum_{k=0}^{N-1} \frac{c}{\alpha} e^{-c_2\alpha(N-k)} \le \frac{c'}{N\alpha^2},\label{eq:sum_of_two_terms_bound1}
    \end{align}
    for a suitable constant $c'$.  This term goes to $0$ when $N$ goes to infinity.

    To treat the first term of \eqref{eq:sum_of_two_terms}, we apply Lemma~\ref{lem:Poisson2} with the function $\ell(\theta,X):=\sum_{n=0}^\infty D^i(h \circ \varphi_{n})(\theta) g(\theta,X)=\sum_{n=0}^\infty u_{n}(\theta)g(\theta,X)$. 
\end{proof}

\subsection{Characterization of $V$}
\label{ssec:computable}

The next Lemma show how an approximate computable expression of the bias term $V$ can be obtained. Recall that by definition, $V$ is dependent on $\varphi$ with step-size $\alpha$. The subsequently defined and  computable expression $\tilde{V}$ is resolves this dependence on $\alpha$ and is further justified as it admits an accurate of order $\alpha$ with respect to $V$.
\begin{lemma}[Approximate Computable Expression of $V$]
    \label{lem:computable}
    Assume \ref{A:mart}--\ref{A:attractor}. Define 
    \begin{align}
        V = D h(\theta^*) \bigl(H^{(1)}\bigr)^{-1} ( S + H^{(2)} \cdot W) + D^2 h(\theta^*) \cdot W \label{eq:definition_Ch}
    \end{align}
    with $H^{(1)}_{i,j} := \frac{\partial \barf_i}{\partial \theta_i}(\theta^*)$ is the Jacobien matrix of $\barf$ at $\theta^*$, $H^{(2)}_{i,jk} = \frac{\partial \barf_i}{\partial \theta_i\partial \theta_j}(\theta^*)$ is the second derivative of $\barf$ in $\theta^*$. In the above notation, $H^{(2)} \cdot W := (\sum_{j,k} H^{(2)}_{i,jk} W_{j,k})_{i=1...d}$, where $W$ is the unique solution to the Sylvester equation $H^{(1)}W + W\bigl(H^{(1)}\bigr)^T + O = 0$ and where $O$ and $S$ are defined by
    \begin{align}
        O & := \sum_{x,x'} G_f (\theta^*, x') \bigl(f(\theta^*, x) + R(\theta^*,x,x')\bigr)^T K(\theta^*)_{x,x'}\pi_x(\theta^*)   + \bar{g}(\theta^*), \label{eq:def_O}\\
        S & := \sum_{x,x'} \bigl( f(\theta^*, x) + R(\theta^*,x,x') \bigr) DG_f (\theta^*, x')^T K(\theta^*)_{x,x'} \pi_x(\theta^*) \label{eq:def_S}
    \end{align}
    with  $\bar{g}(\theta^*) := \sum_x \pi_x(\theta^*)\Bigr((f(\theta^*,x) - \barf(\theta^*))^{\otimes2} + Q(\theta^*, x)\Bigl)$ is defined as in the proof of Proposition~\ref{prop:V}.

    There exists a constant $C>0$ and $\alpha_0$ such that for all $\alpha\le\alpha_0$:
    \begin{align*}
        \abs{V-V\toA}\le C\alpha,
    \end{align*}
    where $V\toA = \alpha(\sum_{k=0}^{\infty} D^2(h\circ\varphi_{k})(\theta^*)\bar{g}(\theta^*) + m(\theta^*))$ is defined as in the proof of Proposition~\ref{prop:V}.
\end{lemma}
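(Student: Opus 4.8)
The plan is to exploit that $\theta^*$ is a fixed point of $\barf$, so that $\varphi_k(\theta^*)=\phi_t(\theta^*)=\theta^*$ for all $k,t$, which collapses every quantity entering $V\toA$ onto the \emph{linearized} dynamics at $\theta^*$. Writing $H:=D\barf(\theta^*)=H^{(1)}$ and $H^{(2)}:=D^2\barf(\theta^*)$, differentiating the Euler recurrence \eqref{eq:discrete_rec} at $\theta^*$ gives $D\varphi_k(\theta^*)=(I+\alpha H)^k$ and the discrete variational identity $D^2\varphi_{k+1}(\theta^*)=(I+\alpha H)D^2\varphi_k(\theta^*)+\alpha H^{(2)}[D\varphi_k(\theta^*),D\varphi_k(\theta^*)]$ with $D^2\varphi_0=0$; the continuous flow satisfies the analogues $D\phi_t(\theta^*)=e^{tH}$ and $\dot\Psi_t=H\Psi_t+H^{(2)}[e^{tH},e^{tH}]$. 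Because $H$ is Hurwitz (Assumption~\ref{A:attractor}), $I+\alpha H$ is a contraction for $\alpha\le\alpha_0$, so all the sums below converge geometrically, consistently with Lemma~\ref{lemma:exponentially_small_derivatives}; in particular, at $\theta^*$ the flow $\phi$ and its Euler discretization $\varphi$ may be interchanged inside these $\alpha$-weighted sums at the cost of $\bO\alpha$.

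First I would expand $V\toA$ with the chain rule. For the variance term, $D^2(h\circ\varphi_k)(\theta^*)\bar{g}(\theta^*)$ splits into the second-order piece $D^2h(\theta^*)\cdot\bigl((I+\alpha H)^k\,\bar{g}(\theta^*)\,(I+\alpha H^T)^k\bigr)$ plus the first-order piece $Dh(\theta^*)\bigl(D^2\varphi_k(\theta^*)\cdot\bar{g}(\theta^*)\bigr)$. For the term $m(\theta^*)$, applying the product rule to $D\bigl(D(h\circ\varphi_n)(\theta^*)G_f(\theta^*,x')\bigr)$ produces a second-derivative piece $D^2(h\circ\varphi_n)(\theta^*)[\,\cdot\,,G_f(\theta^*,x')]$ and a first-order piece $Dh(\theta^*)(I+\alpha H)^n DG_f(\theta^*,x')$. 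The key structural observation is that the second-derivative pieces of \emph{both} terms feed a single $D^2h$-contraction whose source matrix is exactly $O$: the variance term contributes $\bar{g}(\theta^*)$ and the $m$ term contributes $\sum_{x,x'}G_f(\theta^*,x')(f(\theta^*,x)+R(\theta^*,x,x'))^T K_{x,x'}(\theta^*)\pi_x(\theta^*)$, and their sum is the matrix $O$ of \eqref{eq:def_O} (using the symmetry of $D^2h$ to identify a matrix with its transpose).

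Next I would evaluate the resulting discrete sums in closed form and compare them to the continuous objects defining $V$. The pure resolvent sum is \emph{exact}, $\alpha\sum_{k=0}^\infty (I+\alpha H)^k=\alpha(-\alpha H)^{-1}=-H^{-1}$, which turns the first-order piece of $m(\theta^*)$ into $Dh(\theta^*)(-H^{-1})S$ with $S$ as in \eqref{eq:def_S}. The $D^2h$-contraction involves the discrete Lyapunov sum $W_\alpha:=\alpha\sum_{k=0}^\infty (I+\alpha H)^k\,O\,(I+\alpha H^T)^k$, which satisfies the \emph{perturbed} Lyapunov equation $H W_\alpha+W_\alpha H^T+O+\alpha\,H W_\alpha H^T=0$; since $W_\alpha=\bO1$, this gives $W_\alpha=W+\bO\alpha$, where $W$ solves the exact equation $H^{(1)}W+W(H^{(1)})^T+O=0$. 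Finally, substituting the variational solution $D^2\varphi_k(\theta^*)=\alpha\sum_{j<k}(I+\alpha H)^{k-1-j}H^{(2)}[(I+\alpha H)^j,(I+\alpha H)^j]$, swapping the order of summation, and reusing the resolvent identity collapses the $Dh\cdot D^2\varphi$ term to $Dh(\theta^*)(-H^{-1})(H^{(2)}\cdot W)+\bO\alpha$. Collecting the three contributions reproduces \eqref{eq:definition_Ch} up to $\bO\alpha$, which is the claim.

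The main obstacle is this last step: the careful bookkeeping of the second-order chain rule and of the nested variational sums, together with the verification that each discrete sum agrees with its continuous-limit counterpart only up to $\bO\alpha$. The exponential contraction of $I+\alpha H$ is what makes all tails negligible and legitimizes the interchange of summation order, while the perturbed-Lyapunov identity is what pins the gap at exactly $\bO\alpha$; the most error-prone point is matching the index, transposition, and sign conventions so that the variance source and the Poisson source assemble into the single matrix $O$ and the first-order Poisson piece into $S$.
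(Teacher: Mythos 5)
Your proposal is correct and takes essentially the same route as the paper's own proof: exploiting $\varphi_k(\theta^*)=\theta^*$ to reduce everything to the linearized dynamics, computing $D\varphi_k(\theta^*)=(I+\alpha H^{(1)})^k$ and the unrolled variational formula for $D^2\varphi_k(\theta^*)$, evaluating the geometric resolvent sum in closed form, identifying the discrete Lyapunov sum $W^{(\alpha)}$ and comparing it to the continuous solution $W$ with an $\bO{\alpha}$ gap via the perturbed Lyapunov equation, and collapsing the $Dh\cdot D^2\varphi_k$ contribution by swapping the order of summation. The only divergence is your (correct) sign $\alpha\sum_{k\ge0}(I+\alpha H^{(1)})^k=-\bigl(H^{(1)}\bigr)^{-1}$, where the paper writes $(\alpha H^{(1)})^{-1}$ and accordingly states \eqref{eq:definition_Ch} without the minus sign --- a sign typo in the paper rather than a flaw in your argument.
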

Note that the constant $V$ does not depend on $n$ or $\alpha$.

\begin{proof}
    Using the definitions \eqref{eq:def_O} and \eqref{eq:def_S} for $O$ and $S$ respectively, and the definition of $V$, $\bar{g}$ and $m$ in the proof of Proposition~\ref{prop:V}, we can rewrite $V\toA$ as:
    \begin{align}
        V\toA = \alpha\bigl( ~ \sum_{k=0}^\infty D (h \circ \varphi_k)(\theta^{*}) S + \sum_{k=0}^\infty D^2 (h \circ \varphi_k)(\theta^{*}) \cdot O ~ \bigr). \label{eq:numerical_computation_Ch_initial_equation}
    \end{align} 
    Using that $\varphi_k(\theta^{*}) = \theta^{*}$ yields the identity 
    \begin{align}
        V^{(\alpha)} = &\alpha D h(\theta^*) \sum_{k=0}^\infty D \varphi_k(\theta^*) S \\
        &+ \alpha D h(\theta^*) \sum_{k=0}^\infty D^2 \varphi_k(\theta^*) \cdot O + \alpha D^2 h(\theta^*) \cdot \sum_{k=0}^\infty D \varphi_k(\theta^*) O D \bigl(\varphi_k(\theta^*)\bigr)^T,\label{eq:derivatives_proof_computational}
    \end{align}
    where the first line corresponds to the first term of \eqref{eq:numerical_computation_Ch_initial_equation} and the second line to the second term of \eqref{eq:numerical_computation_Ch_initial_equation}. The notations correspond to $D h(\theta^*)  D^2 \varphi_k (\theta^*) = (\sum_i \frac{\partial h}{\partial \theta_i} \frac{\partial \varphi_i}{\partial \theta_m \partial \theta_n})_{m,n}$ and $\cdot$ denotes the sum over the element wise product between the matrices.

    From hereon out, we suppress the dependence of the therms on $\theta^*$ in order to ease the notation. As said before, it is shown in \cite[Lemma~4]{allmeier2023bias}, under assumptions \ref{A:cont} and \ref{A:unichain}, that $G_f$ is computable and four times differentiable, which implies that the terms $S$ and $O$ are computable. Therefore, in the rest of the proof we are concerned with obtaining computable expressions for the terms 
    \begin{align}
        \alpha\sum_{k=0}^\infty D \varphi_k \ ; && \alpha\sum_{k=0}^\infty D \varphi_k O D \varphi_k^T \ ; && \alpha\sum_{k=0}^\infty D^2 \varphi_k\cdot O. \label{eq:infinite_sum_computable_lemma}
    \end{align}
    We recall that by definition $D \varphi_1 = I + \alpha H^{(1)}$. Using the chain rule, we have $D \varphi_k = D (\varphi_1 \circ \varphi_{k-1}) = (I + \alpha H^{(1)}) D \varphi_{k-1} = (I +\alpha H^{(1)})^k$. By assumption of exponential stability, $H^{(1)}$ is Hurwitz and thus, for small enough $\alpha$, all its eigenvalues have negative real parts. This implies that for such $\alpha$, $\sum_{k=0}^\infty (I +\alpha H^{(1)})^k = (\alpha H^{(1)})^{-1}$. This gives the first term $D h(\theta^*) \bigl(H^{(1)}\bigr)^{-1} S$ of \eqref{eq:definition_Ch}.
    
    Define $W^{(\alpha)}$ as the second sum of \eqref{eq:infinite_sum_computable_lemma} which is equal to
    \begin{align*}
        W^{(\alpha)} &:= \alpha\sum_{k=0}^\infty D \varphi_k O D \varphi_k^T \\
        &= \alpha\sum_{k=0}^\infty (I +\alpha H^{(1)})^k O (I +\alpha (H^{(1)})^T)^k.
    \end{align*}
    As $H^{(1)}$ is Hurwitz, $W^{(\alpha)}$ is well-defined for $\alpha$ small enough and is a solution to the discrete-time Sylvester equation $(I + \alpha H^{(1)})W^{(\alpha)}(I + \alpha H^{(1)})^T - W^{(\alpha)} + \alpha O= 0$. To obtain a computable expression independent of $\alpha$, we consider the corresponding continuous-time Sylvester equation given by $H^{(1)}W - W(H^{(1)})^T + O= 0$ for which the Hurwitz property of $H^{(1)}$ ensures the existence of a unique solution $W$. By definition of the two Sylvester equations, we then have that $W^{(\alpha)} =  W + \alpha H^{(1)}W (H^{(1)})^T$, i.e., the difference between the two solutions is of order $\alpha$. This gives the second term $D h(\theta^*) \bigl(H^{(1)}\bigr)^{-1} H^{(2)} \cdot W$ of \eqref{eq:definition_Ch}.
    
    For the last sum of \eqref{eq:infinite_sum_computable_lemma}, apply the chain rule and substitute $D \varphi_1 = I + \alpha H^{(1)}$ and $D^2\varphi_1=\alpha H^{(2)}$ to obtain the identity
    \begin{align*}
        D^2\varphi_k 
        & = \left(\frac{\partial \varphi_k}{\partial \theta_m \partial \theta_n}\right)_{i,mn} \\
        & = \left(\sum_{j=1}^k \sum_{a} \bigl((I+\alpha H^{(1)})^{k-j}\bigr)_{i,a} \sum_{b,c} \alpha H^{(2)}_{a,bc} \bigl((I+\alpha H^{(1)})^{j}\bigr)_{b,m} \bigl((I+\alpha H^{(1)})^{j}\bigr)_{c,n} \right)_{i,mn}. 
    \end{align*}
    Therefore, 
    \begin{align*}
    D^2\varphi_k \cdot O & = (\sum_{j,k} \frac{\partial \varphi_k}{\partial \theta_j \partial \theta_k} O_{j,k})_{i=1...d} \\
    & = \sum_{j=1}^{k} (I+\alpha H^{(1)})^{k-j} \alpha  H^{(2)} \cdot (I+\alpha H^{(1)})^{j} O (I+\alpha (H^{(1)})^T)^{j}.
    \end{align*} 
    with $H^{(2)} \cdot (I+\alpha H^{(1)})^{j} O (I+\alpha (H^{(1)})^T)^{j} = \sum_{m,n} H^{(2)}_{i,mn} \left((I+\alpha H^{(1)})^{j} O (I+\alpha (H^{(1)})^T)^{j}\right)_{m,n}$.
    Consequently, for $\sum_{k=0}^\infty D^2\varphi_k \cdot O$ the above yields
    { \allowdisplaybreaks
    \begin{align*}
        & \alpha\sum_{k=0}^\infty \sum_{j=1}^k (I+\alpha H^{(1)})^{k-j} \alpha H^{(2)} \cdot (I+\alpha H^{(1)})^{j} O (I+\alpha (H^{(1)})^T)^{j} \\
        & \quad = \alpha\sum_{j=1}^\infty \sum_{k=j}^\infty (I+\alpha H^{(1)})^{k-j} \alpha H^{(2)} \cdot (I+\alpha H^{(1)})^{j} O (I+\alpha (H^{(1)})^T)^{j} \\
        & \quad = \alpha\sum_{j=1}^\infty \sum_{k=0}^\infty (I+\alpha H^{(1)})^{k} \alpha H^{(2)} \cdot (I+\alpha H^{(1)})^{j} O (I+\alpha (H^{(1)})^T)^{j} \\
        & \quad = \alpha\sum_{j=1}^\infty (\alpha H^{(1)})^{-1} \alpha H^{(2)} \cdot (I+\alpha H^{(1)})^{j} O (I+\alpha (H^{(1)})^T)^{j} \\
        & \quad =  ( \alpha H^{(1)})^{-1} \alpha H^{(2)} \cdot (I+\alpha H^{(1)}) \Bigl( \alpha\sum_{j=0}^\infty (I+\alpha H^{(1)})^{j} O (I+\alpha (H^{(1)})^T)^{j} \Bigr) (I+\alpha (H^{(1)})^T)\\
        & \quad =  ( \alpha H^{(1)})^{-1} \alpha H^{(2)} \cdot (I+\alpha H^{(1)}) W^{(\alpha)} (I+\alpha (H^{(1)})^T)\\
        & \quad = (H^{(1)})^{-1} H^{(2)} \cdot W^{(\alpha)} + \alpha C',
        \end{align*}
    with $C' = \left( (H^{(1)})^{-1} H^{(2)} \cdot \bigl(H^{(1)} W^{(\alpha)} + W^{(\alpha)} (H^{(1)})^T + \alpha H^{(1)} W^{(\alpha)}(H^{(1)})^T\bigr) \right)$ which is bounded by assumption.} Using the previously discussed difference between $W$ and $W^{(\alpha)}$, we conclude that $(H^{(1)})^{-1} H^{(2)} \cdot W$ is an approximate solution to $\sum_{k=0}^\infty D^2\varphi_k \cdot O$ with error of order $\alpha$. 
\end{proof}

\end{document}